\newtheorem{theorem}{Theorem}[section]
\newtheorem{lemma}[theorem]{Lemma}
\newtheorem{Assumption}[theorem]{Assumption}
\newtheorem{example}[theorem]{Example}
\newtheorem{remark}[theorem]{Remark}
\theoremstyle{definition}
\newtheorem{definition}[theorem]{Definition}
\newtheorem*{problem*}{Problem}
\definecolor{selforange}{RGB}{250,110,0}  
\newcommand{\calB}{{\cal B}}
\newcommand{\calR}{{\cal R}}
\newcommand{\calS}{{\cal S}}
\newcommand{\calU}{{\cal U}}
\newcommand{\calX}{{\cal X}}
\newcommand{\bff}{\mathbf{f}}
\newcommand{\bfu}{\mathbf{u}}
\newcommand{\bfx}{\mathbf{x}}
\newcommand{\bftheta}{\boldsymbol{\theta}}
\newcommand{\bfpi}{\boldsymbol{\pi}}
\newcommand{\bfA}{\mathbf{A}}
\newcommand{\bfB}{\mathbf{B}}
\newcommand{\bfC}{\mathbf{C}}
\newcommand{\bfF}{\mathbf{F}}
\newcommand{\bfK}{\mathbf{K}}
\newcommand{\bfP}{\mathbf{P}}
\newcommand{\bfQ}{\mathbf{Q}}
\newcommand{\bfR}{\mathbf{R}}
\newcommand{\bfS}{\mathbf{S}}
\newcommand{\bbR}{\mathbb{R}}
\begin{document}

% paper title

\title{Certifying Stability of Reinforcement Learning Policies using Generalized Lyapunov Functions}

% candidates
% \title{Generalized Lyapunov Stability for Certified Control and Reinforcement Learning}

%\title{Certifying Stability of Reinforcement Learning Policies using Generalized Lyapunov Functions}

% You will get a Paper-ID when submitting a pdf file to the conference system
% \author{Author Names }

% \author{%
%   Kehan Long \\
%   Contextual Robotics Institute \\
%   University of California, San Diego \\
%   \texttt{k3long@ucsd.edu} \\
%   \And
%   Jorge Cort\'es \\
%   Contextual Robotics Institute \\
%   University of California, San Diego \\
%   \texttt{cortes@ucsd.edu} \\
%   \And
%   Nikolay Atanasov \\
%   Contextual Robotics Institute \\
%   University of California, San Diego \\
%   \texttt{natanasov@ucsd.edu} \\
% }

\author{
  Kehan Long \quad Jorge Cort\'es \quad Nikolay Atanasov \\
  Contextual Robotics Institute \\
  University of California San Diego \\
  \texttt{\{k3long, cortes, natanasov\}@ucsd.edu}
}

\maketitle

\begin{abstract}
Establishing stability certificates for closed-loop systems under reinforcement learning (RL) policies is essential to move beyond empirical performance and offer guarantees of system behavior. Classical Lyapunov methods require a strict stepwise decrease in the Lyapunov function but such certificates are difficult to construct for learned policies. The RL value function is a natural candidate but it is not well understood how it can be adapted for this purpose. To gain intuition, we first study the linear quadratic regulator (LQR) problem and make two key observations. First, a Lyapunov function can be obtained from the value function of an LQR policy by augmenting it with a residual term related to the system dynamics and stage cost. Second, the classical Lyapunov decrease requirement can be relaxed to a generalized Lyapunov condition requiring only decrease on average over multiple time steps. Using this intuition, we consider the nonlinear setting and formulate an approach to learn generalized Lyapunov functions by augmenting RL value functions with neural network residual terms. Our approach successfully certifies the stability of RL policies trained on Gymnasium and DeepMind Control benchmarks. We also extend our method to jointly train neural controllers and stability certificates using a multi-step Lyapunov loss, resulting in larger certified inner approximations of the region of attraction compared to the classical Lyapunov approach. Overall, our formulation enables stability certification for a broad class of systems with learned policies by making certificates easier to construct, thereby bridging classical control theory and modern learning-based methods.
\end{abstract}

\section{Introduction}
Designing and certifying stable control policies is a core challenge in both control theory and reinforcement learning (RL). Although RL methods are capable of learning policies that optimize long-term performance, their stability guarantees are often lacking. The classical Lyapunov theory \citep{lyapunov1992general} offers a principled framework for certifying stability but constructing a function that satisfies the required stepwise decrease condition is often challenging in practice. While value functions from optimal control or RL encode long-term cumulative cost, they do not naturally satisfy the Lyapunov stability conditions.

We propose certifying stability using a generalized notion of Lyapunov function \citep{relaxed_LF_MPC_2023, FURNSINN2025_auto} that replaces the stepwise decrease condition with a multi-step, weighted decrease criterion, allowing temporary increases over a finite horizon. As a result, constructing a generalized Lyapunov stability certificate is easier, enabling stability verification for a broader class of policies and systems.

For linear systems with quadratic cost under discounted optimal control, the value function typically fails to satisfy classical Lyapunov conditions due to the influence of the discount factor. \cite{Romain_2017_TAC} show that augmenting the value function with a quadratic residual term allows certifying stability via a linear matrix inequality (LMI), although the guarantees are conservative with respect to the discount factor. 
% We combine this\NA{I recommend rewriting this more along the lines of the abstract. The main point is that the linear system case shows us how to define a generalized Lyapunov function in a way that can be applied to a nonlinear system.} approach with the enforcement of a multi-step decrease condition to yield a new family of LMIs that certify exponential stability over a broader range of discount factors.
Leveraging this insight, we show that augmenting the value function with a quadratic residual leads to a valid generalized Lyapunov function, which certifies stability for a boarder range of discount factors.

% Both theoretical analysis and empirical results demonstrate that this combination substantially reduces conservatism and closely matches the true stability threshold.

%
% \marginJC{Our weights for the linear case are constant, as originally proposed by Furnsinn et al. Later, in the nonlinear treatment, we move to state-dependent for added flexibility. At some point in the intro we should note this transition, pointing to greater flexibility while retaining stability guarantees, no? }
%

Building on the success of our approach in the linear setting, we extend it to nonlinear systems, where classical Lyapunov analysis is substantially more difficult. We augment value functions learned by RL algorithms (e.g., PPO~\citep{schulman2017proximal}, SAC~\citep{haarnoja2018soft}) with a  neural residual term to form generalized Lyapunov functions, and jointly learn state-dependent multi-step weights to satisfy the generalized decrease condition. This joint learning scheme enables certification of nonlinear RL policies in settings where classical Lyapunov functions are challenging to construct by learning generalized certificates that tolerate non-monotonic behavior along trajectories.

In addition to showing stability of fixed policies, we extend our approach to support the joint synthesis of stable neural controllers and generalized Lyapunov certificates. Prior works \citep{Chang2019NeuralLC, wu2023neural, yang2024lyapunovstable} frame this problem as learning a controller–certificate pair that maximizes the volume of a certifiable region, while iteratively training on counter-examples where Lyapunov conditions fail. We incorporate our generalized multi-step condition into this formulation and show that it enables larger formally verifiable inner approximations of the region of attraction compared to standard one-step Lyapunov training.

\textbf{Contributions.} In summary, we make the following contributions:
\begin{itemize}
    \vspace*{-.5ex}
    \item We introduce an approach for certifying stability of reinforcement learning policies by constructing generalized Lyapunov functions, composed of the policy's value function and a learned neural residual. Empirically, our formulation certifies policies on Gymnasium \citep{towers2024gymnasium} and DeepMind Control Suite \citep{tassa2018deepmind_controlsuite} benchmarks where classical single-step Lyapunov methods fail.

    \vspace*{-.25ex}
    \item We show analytically that, for the linear quadratic regulator (LQR) problem, the validity of the proposed generalized Lyapunov function can be certified by a set of LMIs.
    
    \vspace*{-.25ex}
    \item We extend the formulation to jointly learn a control policy and a stability certificate using a generalized multi-step Lyapunov loss, resulting in larger certified regions of attraction compared to classical Lyapunov approaches.

    \vspace*{-.25ex}
    \item We provide an open-source implementation of our method at \url{https://github.com/ExistentialRobotics/Generalized_Policy_Stability}.

\end{itemize}

\textbf{Related Work.}
Optimal control \citep{bertsekas2012dynamic} and reinforcement learning (RL) \citep{sutton1998reinforcement} offer a variety of principled techniques for designing control policies to optimize the performance of intelligent agents and dynamical systems. In the special case of linear systems with quadratic rewards, the optimality and stability of LQR policies \citep{anderson2007optimal} are well understood. However, despite a variety of methods for synthesizing policies (e.g., PPO \citep{schulman2017proximal}, SAC \citep{haarnoja2018soft}, and DDPG \citep{lillicrap2015continuous} are among the most widely used) for nonlinear systems with nonlinear costs, very few techniques are available for certifying the stability of the resulting closed-loop system. 
%Moreover, discounting in the cost function can break the Lyapunov decrease condition, complicating stability analysis even in LQR problems \citep{Romain_2017_TAC}.

Several works have explored connections between Lyapunov stability \citep{lyapunov1992general} and RL, mainly in the context of safe RL. \citet{perkins2002lyapunov} proposed a safe RL formulation by learning to switch among hand-designed Lyapunov-stable controllers. \citet{berkenkamp2017safe} developed a model-based RL algorithm that uses Gaussian process dynamics and Lyapunov verification to safely improve policies while guaranteeing stability with high probability. \citet{chow2018lyapunov} formulate a Lyapunov-based method for safe RL in constrained Markov decision processes, using linear constraints to ensure safety during training. \citet{hejase2023lyapunov} regularized policy training by penalizing violations of a learned Lyapunov condition to encourage stability in autonomous driving. \citet{mittal2020neural} learn a Lyapunov function from suboptimal demonstrations and incorporate it as a terminal cost in a one-step model predictive control formulation to improve stability. 

In parallel, there has been growing interest in jointly learning policies and Lyapunov functions from data to obtain stability guarantees for nonlinear systems. \citet{Chang2019NeuralLC} trained Lyapunov certificates with formal verification via satisfiability modulo theories. Other works constrain neural architectures to satisfy Lyapunov conditions structurally \citep{gaby2021lyapunov_net, boffi2021learning}, or apply mixed-integer programming \citep{dai_2021_lyapunov}. \citet{long2023dro_lf} studied neural distributionally robust Lyapunov functions under additive uncertainty. \citet{wu2023neural} learn provably stable controllers for discrete-time nonlinear systems by jointly training a policy and Lyapunov function, with formal verification via mixed-integer linear programs.
\citet{yang2024lyapunovstable} use gradient-guided falsification for training a controller-certificate pair, and apply formal verification via \(\alpha,\beta\)-CROWN \citep{wang2021beta} to certify stability post training. See also~\citet{Dawson_2022_survey} for a comprehensive survey.

%======================================================================
\section{Problem Formulation}\label{sec:problem}

Consider the discrete-time system:
\begin{equation}
\label{eq:controlled_system}
\bfx_{k+1} = \bff(\bfx_k, \bfu_k), \quad \bfx_k \in \mathcal{X} \subseteq \mathbb{R}^n, \quad \bfu_k \in \mathcal{U} \subseteq \mathbb{R}^m,
\end{equation}
where  \( \mathcal{X} \) is open and \( \bff : \calX \times \calU \rightarrow \calX \) is locally Lipschitz. A control policy $\bfpi: \calX \to \calU$ for the system can be obtained by solving an infinite-horizon discounted optimal control problem:
\begin{equation}
\label{eq:optimal_control}
\begin{aligned}
J_\gamma^*(\bfx_0) = \min_{\bfpi} &\; J_{\gamma}^{\bfpi}(\bfx_0) := \sum_{k=0}^\infty \gamma^k \, \ell(\bfx_k, \bfpi(\bfx_k)),\\
\text{s.t. } & \; \bfx_{k+1} = \bff(\bfx_k, \bfpi(\bfx_k)), \quad \bfx_k \in \calX, \quad \bfpi(\bfx_k) \in \calU,
\end{aligned}
\end{equation}
where \( \gamma \in (0, 1) \) is a discount factor and \( \ell(\bfx, \bfu) \) is a stage cost (or reward in the case of maximization), specifying the performance criterion. 

Optimal control and reinforcement learning methods usually solve a problem like \eqref{eq:optimal_control} to obtain a policy $\bfpi$ and associated value function $J_{\gamma}^{\bfpi}$. We consider a deterministic problem in our theoretical development for simplicity but apply our approach to stochastic problems in the evaluation (Section~\ref{sec:nonlinear_rl}). Note that, while reinforcement learning methods work with stochastic control policies during training, only the mode of the final trained policy is used at test time, allowing us to treat it as deterministic.  

Given a policy $\bfpi$, we are interested in certifying whether it stabilizes the closed-loop system:
\begin{equation}
\label{eq:closed_loop}
\bfx_{k+1} = \bff(\bfx_k, \bfpi(\bfx_k)).
\end{equation}
We assume \( \bfpi(\mathbf{0}_n) = \mathbf{0}_m \), \( \bff(\mathbf{0}_n, \mathbf{0}_m) = \mathbf{0}_n, \ell(\mathbf{0}_n,\mathbf{0}_m)=0,\) and $\mathbf{0}_n \in \calX$,
% \NA{We need to assume that $\ell(\mathbf{0}_n,\mathbf{0}_m)=0$ too.}
so that the origin is an equilibrium point of the system. Stability can be certified by identifying a Lyapunov function.

\begin{definition}[\textbf{Lyapunov Function}]
\label{def:classical_lyapunov}
Consider the closed-loop system~\eqref{eq:closed_loop}. A continuous function \( V: \mathcal{X} \to \mathbb{R}_{\ge 0} \) is a \emph{Lyapunov function} if it satisfies:
\begin{subequations}
\begin{align}
   V(\mathbf{0}_n) = 0, \quad
   V(\bfx) > 0 &\quad \forall \bfx \in \calX \setminus \{\mathbf{0}_n\}, \label{eq:pos_def}\\
   V\big(\bff(\bfx, \bfpi(\bfx))\big) - V(\bfx) < 0 &\quad \forall \bfx \in \calX \setminus \{\mathbf{0}_n\}.\label{eq:strict_dec}
\end{align}
\end{subequations}
\end{definition}

Lyapunov functions certify the asymptotic stability of the system, as stated in the following result.

\begin{theorem}[\textbf{Asymptotic Stability via a Lyapunov Function}~{\cite[Theorem 3.3]{khalil1996nonlinear}}]
\label{thm:classical_lyapunov}
If there exists a Lyapunov function \(V\) as in Definition~\ref{def:classical_lyapunov}, then the origin \(\bfx = \mathbf{0}_n\) is an asymptotically stable equilibrium of the system~\eqref{eq:closed_loop}. 
\end{theorem}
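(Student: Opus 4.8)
The plan is to establish the two constitutive properties of asymptotic stability separately---Lyapunov stability (trajectories starting near the origin remain near it) and attractivity (such trajectories converge to it)---and to drive both from a single observation: by the strict decrease condition~\eqref{eq:strict_dec}, the scalar sequence $V(\bfx_k)$ is strictly decreasing along any trajectory of~\eqref{eq:closed_loop} that has not yet reached the origin, and is bounded below by $0$ thanks to positive definiteness~\eqref{eq:pos_def}. In particular $V$ is non-increasing along trajectories, so every sublevel set $\{\bfx : V(\bfx) \le c\}$ is a candidate forward-invariant region.

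For stability, fix $\epsilon > 0$ small enough that $\overline{B}_\epsilon(\mathbf{0}_n) \subseteq \calX$. Since $V$ is continuous and positive definite, $\alpha := \min_{\|\bfx\| = \epsilon} V(\bfx)$ is attained on the compact sphere and is strictly positive. Using continuity of $V$ at the origin together with $V(\mathbf{0}_n) = 0$, I would pick $\delta \in (0,\epsilon)$ with $V(\bfx) < \alpha$ for all $\|\bfx\| < \delta$. Confinement then follows by working with a sublevel set $\{\bfx : V(\bfx) \le c\}$ with $c < \alpha$: because $V$ is non-increasing a trajectory started there keeps $V(\bfx_k) \le c$, and since $V \ge \alpha > c$ on the sphere $\|\bfx\| = \epsilon$, the iterates are held away from $\partial B_\epsilon$, giving $\bfx_0 \in B_\delta \Rightarrow \bfx_k \in B_\epsilon$ for all $k$, which is the $\epsilon$--$\delta$ characterization of stability.

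For attractivity, take an initial condition in the $\delta$-ball above, so the trajectory remains in the compact set $\overline{B}_\epsilon$. The sequence $V(\bfx_k)$ is monotone and bounded below, hence converges to some limit $c \ge 0$; it remains to show $c = 0$. I would argue by contradiction: if $c > 0$, every iterate lies in the compact annular set $S := \{\bfx \in \overline{B}_\epsilon : c \le V(\bfx) \le V(\bfx_0)\}$, which excludes the origin. On $S$ the continuous map $\bfx \mapsto V\big(\bff(\bfx,\bfpi(\bfx))\big) - V(\bfx)$ attains a maximum $-\beta$, and $\beta > 0$ because this difference is strictly negative everywhere on $S$ by~\eqref{eq:strict_dec} (as $\mathbf{0}_n \notin S$). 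Then $V(\bfx_{k+1}) \le V(\bfx_k) - \beta$ at every step, forcing $V(\bfx_k) \to -\infty$ and contradicting $V \ge 0$. Hence $c = 0$, and since $V$ is continuous and positive definite this yields $\bfx_k \to \mathbf{0}_n$.

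The step I expect to be the crux is extracting the uniform strict-decrease margin $\beta$ in the attractivity argument: it rests on compactness of the annulus $S$ together with continuity of the closed-loop map $\bfx \mapsto \bff(\bfx,\bfpi(\bfx))$, which requires the policy $\bfpi$ to be continuous (the dynamics $\bff$ are already locally Lipschitz, hence continuous), so I would make this hypothesis explicit. A secondary technical point, and the genuine departure from Khalil's continuous-time proof, is the confinement step: in discrete time a trajectory could in principle jump across $\partial B_\epsilon$, so ruling this out requires care---e.g.\ restricting to a bounded sublevel set whose closure lies in $B_\epsilon$ and is mapped into itself---rather than the continuity-of-flow argument used in the continuous-time case. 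Trajectories that reach the origin in finite time are handled trivially.
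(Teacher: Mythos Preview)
The paper does not actually prove this theorem---it is stated with a citation to Khalil and no proof appears anywhere in the text or appendices. Your proposal is a correct and complete discrete-time adaptation of the standard textbook argument, and your identification of the two subtleties (continuity of $\bfpi$ needed for the compactness step, and the discrete-time jump-over-the-sphere issue in the confinement step) is accurate.

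For context, the closest thing the paper contains is the proof of the \emph{generalized} result, Theorem~\ref{thm:generalized_Lyapunov_GAS}, in Appendix~\ref{appendix:stability_proof}, which specializes to this statement when $M=1$. There the approach differs from yours: for stability the paper invokes a Lipschitz bound $\|\bfx_{k+i}\|\le L^i\|\bfx_k\|$ on the closed-loop map (requiring the stronger Lipschitz hypothesis on $\bfpi$) rather than your sublevel-set confinement argument, and for attractivity it defers to ``standard Lyapunov arguments'' without spelling out the uniform-margin contradiction you give. Your route is closer to Khalil's original and uses only continuity of $\bfpi$, whereas the paper's generalized proof needs Lipschitz continuity to control the intermediate steps between return indices---an assumption that is superfluous when $M=1$.
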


When \(\bfpi\) arises from \eqref{eq:optimal_control}, the corresponding value function \(J_\gamma^{\bfpi}\) is also available. We are interested in whether \(J_\gamma^{\bfpi}\) can itself certify the stability of \eqref{eq:closed_loop} or assist in constructing a certificate.

\section{Lyapunov Stability Analysis for Linear Quadratic Problems}
\label{sec:linear_system}

To understand the relationship between a discounted value function \(J_\gamma^{\bfpi}\) obtained from \eqref{eq:optimal_control} and a Lyapunov function \(V\) certifying stability of the closed-loop system \eqref{eq:closed_loop}, we first study a simple setting with a linear system and quadratic stage cost. 

Consider the discrete-time linear system:
% \begin{equation}
% \label{eq:linear_system}
$\bfx_{k+1} = \bfA \bfx_k + \bfB \bfu_k$,
% \end{equation}
where \(\bfx_k \in \mathbb{R}^n\), \(\bfu_k \in \mathbb{R}^m\), and \(\bfA\), \(\bfB\) are known constant matrices.

Choosing the stage cost in \eqref{eq:optimal_control} as \( \ell(\bfx, \bfu) = \bfx^\top \bfQ \bfx + \bfu^\top \bfR \bfu \) with \( \bfQ = \bfC^\top \bfC \succeq 0 \) and \( \bfR \succ 0 \) leads to the discounted LQR problem \citep{anderson2007optimal}. The optimal value function $J_\gamma^*(\bfx) = \bfx^\top \bfP_\gamma \bfx$ is quadratic, where $\bfP_\gamma$ solves the discounted Algebraic Riccati Equation (ARE):
\begin{equation}
\label{eq:discounted_are}
\bfP_\gamma = \bfA^\top \big( \bfP_\gamma - \bfP_\gamma \bfB \big( \gamma \bfB^\top \bfP_\gamma \bfB + \bfR \big)^{-1} \bfB^\top \bfP_\gamma \big) \bfA + \bfQ.
\end{equation}
The corresponding optimal feedback gain is \( \bfK_\gamma^\star = -(\gamma \bfB^\top \bfP_\gamma \bfB + \bfR)^{-1} \bfB^\top \bfP_\gamma \bfA \), which yields the policy \( \bfpi_\gamma^*(\bfx_k) = \bfK_\gamma^\star \bfx_k \).
% \NA{I changed the notation to $\bfpi_\gamma^*$ and $\bfF_\gamma^*$. Please propagate these changes to the rest of the paper and supplementary material if necessary.}
Under this policy, the closed-loop system evolves as 
\begin{equation}
\label{eq:linear_lqr_closed_loop}
\bfx_{k+1} = \bfF_\gamma^* \bfx_k, \quad \text{where} \quad \bfF_\gamma^* = \bfA + \bfB \bfK_\gamma^\star.
\end{equation}

Although \(J_\gamma^*\) satisfies the discounted Bellman equation, it is well understood that it does not guarantee Lyapunov decrease due to \(\gamma \in (0,1)\) \citep{de2003linear}. Moreover, verifying stability by directly analyzing the eigenvalues of $\bfF_\gamma^*$ and solving the discounted ARE \eqref{eq:discounted_are} is nontrivial due to its nonlinear dependence on $\gamma$. This motivates the idea \citep{Romain_2017_TAC} of constructing valid Lyapunov functions by augmenting \(J_\gamma^*\) with a residual term. To formalize this, we first state standard assumptions that ensure the existence of a stabilizing policy and a well-defined value function.

\begin{Assumption}
\label{assump:linear_system}
The pair $(\bfA, \bfB)$ is stabilizable and the pair $(\bfA, \bfC)$ is detectable.
\end{Assumption}

The following result shows the existence of a Lyapunov certificate derived from the value function.

\begin{theorem}[\textbf{Lyapunov Stability via LMIs for Discounted LQR}~\citep{Romain_2017_TAC}]
\label{thm:classical_stability_LMI}
Suppose Assumption~\ref{assump:linear_system} holds. Consider an optimal policy $\bfpi_\gamma^*$ and value function $J_\gamma^*$ obtained from the LQR problem with discount $\gamma$. Let \(\bfP\) denote the solution to the undiscounted ARE in \eqref{eq:discounted_are} (with $\gamma = 1$). If there exist symmetric positive definite matrices \(\bfS_0, \bfS_1 \succ 0\) and scalars \(\varpi, \alpha > 0\) such that:
\begin{align}
\label{eq:lmi_1step}
\begin{bmatrix}
\bfA^\top \bfS_0 \bfA - \bfS_0 + \bfS_1 - \varpi \bfQ & \bfA^\top \bfS_0 \bfB \\
\bfB^\top \bfS_0 \bfA & \bfB^\top \bfS_0 \bfB - \varpi \bfR
\end{bmatrix} \preceq 0, \quad
\alpha \bfP \preceq \bfS_1,
\end{align}
then the function
% \begin{equation}
% \label{eq:classical_augmented_lyap}
$V(\bfx) := J_\gamma^*(\bfx) + \frac{1}{\varpi} \bfx^\top \bfS_0 \bfx$
% \end{equation}
is a Lyapunov function for the closed-loop system~\eqref{eq:linear_lqr_closed_loop}, and certifies global exponential stability for any discount factor satisfying
% \begin{equation}
% \label{eq:classical_gamma_condition}
$\gamma > \frac{\varpi}{\varpi + \alpha}$.
% \end{equation}
\end{theorem}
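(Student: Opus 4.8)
The plan is to show that $V(\bfx)=\bfx^\top\bfM\bfx$ with $\bfM:=\bfP_\gamma+\tfrac1\varpi\bfS_0$ is a Lyapunov function. Positive definiteness \eqref{eq:pos_def} is immediate: $\bfP_\gamma\succeq 0$ together with $\bfS_0\succ 0$ and $\varpi>0$ give $\bfM\succ 0$, so $V(\mathbf{0}_n)=0$ and $V(\bfx)>0$ otherwise. The entire content lies in the decrease \eqref{eq:strict_dec}. Writing $\bfF:=\bfF_\gamma^*$, we have $V(\bfF\bfx)-V(\bfx)=\bfx^\top(\bfF^\top\bfM\bfF-\bfM)\bfx$, so it suffices to prove the matrix inequality $\bfF^\top\bfM\bfF-\bfM\prec 0$; strict negative definiteness then yields exponential (not merely asymptotic) stability through the standard quadratic Lyapunov argument for linear systems.

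I would assemble this inequality from two ingredients. First, the optimal discounted value function is the policy-evaluation solution for its own closed loop, so with $\bfK:=\bfK_\gamma^\star$ one has the discounted Lyapunov identity $\bfP_\gamma=\bfQ+\bfK^\top\bfR\bfK+\gamma\,\bfF^\top\bfP_\gamma\bfF$, equivalently $\bfF^\top\bfP_\gamma\bfF-\bfP_\gamma=\tfrac{1-\gamma}{\gamma}\bfP_\gamma-\tfrac1\gamma(\bfQ+\bfK^\top\bfR\bfK)$. The offending term is $\tfrac{1-\gamma}{\gamma}\bfP_\gamma\succeq 0$, which is exactly why $J_\gamma^*$ alone fails to decrease. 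Second, I would apply a congruence transformation to the block LMI \eqref{eq:lmi_1step} by the full-column-rank matrix $\begin{bmatrix}\bfI\\ \bfK\end{bmatrix}$; since congruence preserves $\preceq 0$ and $\bfF=\bfA+\bfB\bfK$, the $2\times 2$ block form collapses exactly into $\bfF^\top\bfS_0\bfF-\bfS_0+\bfS_1-\varpi\bfQ-\varpi\bfK^\top\bfR\bfK\preceq 0$, i.e.\ a usable bound on $\bfF^\top\bfS_0\bfF-\bfS_0$.

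Combining is then mechanical: adding $\tfrac1\varpi$ times the second inequality to the first identity cancels the $\bfQ+\bfK^\top\bfR\bfK$ contributions and leaves $\bfF^\top\bfM\bfF-\bfM\preceq\tfrac{1-\gamma}{\gamma}\bigl(\bfP_\gamma-\bfQ-\bfK^\top\bfR\bfK\bigr)-\tfrac1\varpi\bfS_1$. I drop $-\bfQ-\bfK^\top\bfR\bfK\preceq 0$ to get $\tfrac{1-\gamma}{\gamma}\bfP_\gamma$, and then use the comparison $\bfP_\gamma\preceq\bfP$, which holds because the discounted cost is dominated by the undiscounted cost of the undiscounted-optimal policy whenever $\ell\ge 0$ (here $\bfQ\succeq 0$, $\bfR\succ 0$). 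For the last term I invoke the hypothesis $\alpha\bfP\preceq\bfS_1$. Altogether $\bfF^\top\bfM\bfF-\bfM\preceq\bigl(\tfrac{1-\gamma}{\gamma}-\tfrac{\alpha}{\varpi}\bigr)\bfP$, and the stated condition $\gamma>\tfrac{\varpi}{\varpi+\alpha}$ is precisely the statement that $\tfrac{1-\gamma}{\gamma}-\tfrac{\alpha}{\varpi}<0$; with $\bfP\succ 0$ the right-hand side is negative definite, establishing \eqref{eq:strict_dec}.

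I expect the congruence step and the monotonicity $\bfP_\gamma\preceq\bfP$ to be the two places requiring care. The congruence must reproduce the cross terms $\bfA^\top\bfS_0\bfB\bfK+\bfK^\top\bfB^\top\bfS_0\bfA$ together with $\bfK^\top(\bfB^\top\bfS_0\bfB)\bfK$ so that they complete the square into $\bfF^\top\bfS_0\bfF$ exactly; this is routine but is where a stray sign or factor would enter. The comparison $\bfP_\gamma\preceq\bfP$ rests on nonnegativity of the stage cost and a dominated-policy argument rather than any direct manipulation of the two Riccati equations. Finally, I would flag that strict decrease, and hence the claimed exponential stability, uses $\bfP\succ 0$: I would invoke positive definiteness of the undiscounted ARE solution (guaranteed under Assumption~\ref{assump:linear_system} together with the cost structure) to rule out a nontrivial kernel on which the bound degenerates to a non-strict inequality.
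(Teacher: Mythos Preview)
Your proposal is correct and follows essentially the same route as the paper, which proves the generalized Theorem~\ref{thm:generalized_stability} in Appendix~\ref{appendix:generalized_stability_proof} by the identical mechanism (Bellman identity for $\bfP_\gamma$, the LMI to bound the $\bfS_0$-increment, the comparison $\bfP_\gamma\preceq\bfP$, and then $\alpha\bfP\preceq\bfS_1$). The paper carries the computation in quadratic forms in $(\bfx_k,\bfu_k)$ and then substitutes $\bfu_k=\bfK\bfx_k$, whereas you do the equivalent congruence by $\begin{bmatrix}\bfI\\\bfK\end{bmatrix}$ at the matrix level; these are the same manipulation.

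One small correction: you invoke $\bfP\succ 0$ to obtain strictness, but Assumption~\ref{assump:linear_system} only gives detectability, which guarantees $\bfP\succeq 0$, not $\bfP\succ 0$. The fix is immediate and is in fact what the paper does: instead of terminating the chain at $\bigl(\tfrac{1-\gamma}{\gamma}-\tfrac{\alpha}{\varpi}\bigr)\bfP$, use $\bfP_\gamma\preceq\bfP\preceq\tfrac{1}{\alpha}\bfS_1$ to arrive at $\bigl(\tfrac{1-\gamma}{\gamma\alpha}-\tfrac{1}{\varpi}\bigr)\bfS_1$, which is strictly negative definite since $\bfS_1\succ 0$ by hypothesis and the scalar coefficient is negative precisely when $\gamma>\tfrac{\varpi}{\varpi+\alpha}$.
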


Theorem~\ref{thm:classical_stability_LMI} shows that while the discounted value function \( J_\gamma^* \) may not satisfy the Lyapunov condition on its own, it can be modified into a valid certificate when $\gamma$ is sufficiently large. The required residual term and a computable lower bound on \( \gamma \) are obtained by solving a set of LMIs. However, this bound is often conservative compared to the true stability threshold \( \gamma^* \) obtained by analyzing the closed-loop dynamics via the discounted ARE, as illustrated in the following example.

\begin{example}[{\cite[Example 1]{Romain_2017_TAC}}]
\label{example:scalar_lqr}
Consider the scalar system \( x_{k+1} = 2x_k + u_k \) with stage cost \( \ell(x, u) = x^2 + u^2 \). The optimal policy is \( u_k = K_\gamma^\star x_k \), where
\[
K_\gamma^\star = -2\big(1 + 2\big(5\gamma-1+\sqrt{(5\gamma-1)^2+4\gamma}\big)^{-1}\big)^{-1}.
\]
The closed-loop multiplier is \( F_\gamma^* = 2 + K_\gamma^\star \), and the origin is globally exponentially stable if and only if \( |F_\gamma^*| < 1 \), which is equivalent to \( \gamma > \gamma^* = 1/3 \). However, applying the LMIs from Theorem~\ref{thm:classical_stability_LMI} yields a feasible solution for \( \gamma > 0.8090 \), which is significantly more conservative.
\end{example}

Example~\ref{example:scalar_lqr} highlights the limitation of classical Lyapunov analysis and motivates the development of an alternative formulation with less conservative conditions for certifying stability.

\section{Generalized Lyapunov Stability and Applications to Linear Systems}
\label{sec:linear_generalized_stability}

Building on the observation from Example~\ref{example:scalar_lqr}, we introduce a generalized notion of Lyapunov stability \cite[Definition 2.2]{relaxed_LF_MPC_2023} that relaxes the classical pointwise decrease condition.

\begin{definition}[\textbf{Generalized Lyapunov Function}]
\label{def:generalized_lyapunov}
Consider the closed-loop system in \eqref{eq:closed_loop}.
A continuous function \(V: \mathcal{X} \to \mathbb{R}_{\ge 0}\) is a \emph{generalized Lyapunov function} if it satisfies \eqref{eq:pos_def} and there exist an integer \(M \in \mathbb{N}_{>0}\) and state-dependent non-negative weights \(\sigma_1(\bfx), \dots, \sigma_M(\bfx)\) such that
\begin{equation}
\label{eq: weights_condition}
    \frac{1}{M} \sum_{i=1}^{M} \sigma_i(\bfx) \ge 1,
\end{equation}
and, for any $\bfx \in \calX \setminus \{\boldsymbol{0}_n\}$, the following generalized decrease condition holds:
\begin{equation}
\label{eq:relaxed_dec}
    \frac{1}{M}\!\biggl( \sum_{i=1}^{M} \sigma_i(\mathbf{x})\, V(\mathbf{x}_{i}) \biggr) - V(\mathbf{x}) < 0,
\end{equation}
where $\mathbf{x}_{i+1}=\mathbf{f}(\mathbf{x}_i,\bfpi(\mathbf{x}_i))$ with $\mathbf{x}_0 = \mathbf{x}$.
\end{definition}

Condition \eqref{eq:relaxed_dec} generalizes the decrease requirement in \eqref{eq:strict_dec} by allowing temporary increases in \(V\) over individual steps, provided that its weighted average decreases over a finite horizon of length $M$. We state the stability guarantee provided by a generalized Lyapunov function in the next theorem.

\begin{theorem}[\textbf{Asymptotic Stability via a Generalized Lyapunov Function}]
\label{thm:generalized_Lyapunov_GAS}
Consider the closed-loop system in \eqref{eq:closed_loop}, and assume the policy \( \bfpi \) is Lipschitz on \( \mathcal{X} \). If there exists a generalized Lyapunov function \(V: \mathcal{X} \to \mathbb{R}_{\ge 0}\) as in Definition~\ref{def:generalized_lyapunov}, then \(\bfx = \mathbf{0}_n\) is an asymptotically stable equilibrium.
\end{theorem}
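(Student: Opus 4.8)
\noindent\emph{Proof strategy.} The plan is to prove the two ingredients of asymptotic stability separately --- Lyapunov stability (trajectories starting near the origin stay near it) and attractivity (they converge to it) --- and then combine them. Throughout I write $F(\bfx) := \bff(\bfx,\bfpi(\bfx))$ for the closed-loop map, so that $\bfx_i = F^i(\bfx_0)$, and $v_k := V(\bfx_k)$ along a trajectory. The first step is a reduction that extracts a usable consequence of \eqref{eq: weights_condition}--\eqref{eq:relaxed_dec} while discarding the (possibly non-continuous) weights. Since $v_i \ge 0$ and $\frac1M\sum_{i=1}^M\sigma_i(\bfx)\ge 1$, we have $\frac1M\sum_{i=1}^M\sigma_i(\bfx)v_i \ge \bigl(\frac1M\sum_{i=1}^M\sigma_i(\bfx)\bigr)\min_{1\le i\le M}v_i \ge \min_{1\le i\le M}v_i$, so \eqref{eq:relaxed_dec} yields the weight-free \emph{minimum-decrease} property $\min_{1\le i\le M} V(F^i(\bfx)) < V(\bfx)$ for all $\bfx\in\calX\setminus\{\mathbf{0}_n\}$. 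Because $V$, $\bff$, and $\bfpi$ are continuous, $G(\bfx):=V(\bfx)-\min_{1\le i\le M}V(F^i(\bfx))$ is continuous and strictly positive off the origin, hence admits a uniform bound $G\ge\eta>0$ on any compact annulus $\{r\le\norm{\bfx}\le\epsilon\}\subset\calX$.

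The hard part is Lyapunov stability, precisely because $V$ may increase over individual steps, so the classical sublevel-set argument fails. This is where I would use the Lipschitz hypothesis: since $\bff$ is locally Lipschitz, $\bfpi$ is Lipschitz, and $F(\mathbf{0}_n)=\mathbf{0}_n$, the map $F$ satisfies $\norm{F(\bfx)}\le L\norm{\bfx}$ on some ball $\bar B(\mathbf{0}_n,\epsilon)\subset\calX$, hence $\norm{F^i(\bfx)}\le L^i\norm{\bfx}$ for $0\le i\le M$ while the iterates remain in the ball. Using class-$\mathcal{K}$ bounds $\alpha_1(\norm{\bfx})\le V(\bfx)\le\alpha_2(\norm{\bfx})$ valid on $\bar B(\mathbf{0}_n,\epsilon)$ (from continuity and positive-definiteness of $V$, cf.\ \eqref{eq:pos_def}), this estimate controls the intra-window growth of $V$ by a class-$\mathcal{K}$ function $\psi(s):=\alpha_2\bigl(L^M\alpha_1^{-1}(s)\bigr)$, while the minimum-decrease property forces a strictly decreasing sequence of record-low values $v_{j_0}>v_{j_1}>\cdots$ with consecutive indices at most $M$ apart. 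One then chooses $\delta$ so small that $\psi(\alpha_2(\delta))<\alpha_1(\epsilon)$, which keeps every iterate inside $\bar B(\mathbf{0}_n,\epsilon)$. I expect the circularity here to be the main technical obstacle --- the Lipschitz bound presupposes that the iterates stay in the ball, which is exactly what is being proved --- and I would resolve it by induction on the first-exit time, showing the exit can never occur.

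For attractivity I would fix an initial condition small enough that, by the stability step just established, the whole trajectory remains in the compact set $\bar B(\mathbf{0}_n,\epsilon)\subset\calX$. I first claim $\liminf_k v_k=0$: otherwise $\liminf_k v_k=c>0$, so $v_k>c-\epsilon'$ eventually, confining the iterates to a compact annulus excluding the origin on which the uniform decrease $G\ge\eta>0$ holds at every step, giving $\min_{1\le i\le M}v_{j+i}\le v_j-\eta$; choosing $j$ with $v_j$ close to $\liminf_k v_k$ then contradicts the eventual lower bound once $\epsilon'<\eta/2$. Hence some subsequence $v_{k_j}\to0$, so $\bfx_{k_j}\to\mathbf{0}_n$. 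Finally I invoke Lyapunov stability once more: as soon as one $\bfx_{k_j}$ enters the ball of radius $\delta(\epsilon')$, all later iterates stay within $\epsilon'$, so $\limsup_k\norm{\bfx_k}\le\epsilon'$ for every $\epsilon'>0$, i.e.\ $\bfx_k\to\mathbf{0}_n$. Combining stability and attractivity yields asymptotic stability of the origin, as claimed.
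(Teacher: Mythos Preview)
Your proposal is correct and rests on the same two pillars as the paper's proof: (i) the observation that \eqref{eq: weights_condition}--\eqref{eq:relaxed_dec} imply the weight-free \emph{minimum-decrease} property $\min_{1\le i\le M}V(F^i(\bfx))<V(\bfx)$, which yields a strictly decreasing subsequence $V(\bfx_{k_j})$ at ``record'' indices spaced at most $M$ apart; and (ii) the Lipschitz estimate $\norm{\bfx_{k+i}}\le L^i\norm{\bfx_k}$ to control the intra-window excursions for stability. The paper assembles these in the opposite order---attractivity first (via the record subsequence and bounded gaps), then stability (directly choosing $\delta=\varepsilon/L^{M-1}$)---whereas you prove stability first and then leverage it to confine the trajectory to a compact set for the $\liminf$-based attractivity argument. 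Your ordering has the advantage that the attractivity step is cleanly local and the compactness needed for the uniform bound $G\ge\eta$ is already in hand; the paper's ordering gives a more global attractivity statement but leans on ``standard Lyapunov arguments'' for the subsequence convergence. You are also right to flag the circularity in the Lipschitz step and to resolve it by induction on the first exit time---the paper applies the bound without addressing this. One minor cleanup: in the induction it is cleaner to bound $\norm{\bfx_k}$ directly by $L^{k-j_l}\alpha_1^{-1}(\alpha_2(\delta))$ rather than routing through $V(\bfx_k)\le\psi(\cdot)$, since the upper comparison $V(\bfx_k)\le\alpha_2(\norm{\bfx_k})$ presupposes $\bfx_k$ is already in the ball.
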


See Appendix~\ref{appendix:stability_proof} for the proof (adapted from~\cite[Theorem 2]{relaxed_LF_MPC_2023}), and Appendix~\ref{appendix:construct_lf_from_glf} for an explicit construction of a classical one-step Lyapunov function from a generalized Lyapunov function when the step weights $\{\sigma_i\}_{i=1}^M$ are state-independent.

\begin{remark}[\textbf{Relation to $k$-Inductive Verification}]
The generalized Lyapunov condition in \eqref{eq:relaxed_dec} can be viewed through the lens of $k$-step inductive verification \citep{brain2015safety, anand2022k, wooding2024learning}. In particular, by selecting weights
$\sigma_1=\cdots=\sigma_{k-1}=0$ and $\sigma_k=M$,
condition~\eqref{eq:relaxed_dec} reduces to $V(x_{t+k}) \leq (1-\alpha) V(x_t)$, which is analogous to a $k$-step Lyapunov decrease condition used in $k$-inductive reasoning.
Classical $k$-induction requires a strict decrease at the final step while
permitting bounded increases in intermediate steps.
In contrast, Definition~\ref{def:generalized_lyapunov} enforces a decrease in a weighted average across multiple steps, with the weights $\sigma_i$ chosen adaptively or learned during training. This distinction provides additional flexibility, enabling certificates that may be easier to construct for nonlinear systems with learned controllers.
\end{remark}

% and Appendix~\ref{appendix:construct_lf_from_glf} for a method to construct classical Lyapunov functions from generalized ones. 

Returning to the LQR setting in Section~\ref{sec:linear_system}, we consider certifying the stability of \eqref{eq:linear_lqr_closed_loop} via a generalized Lyapunov function. Building on Theorem~\ref{thm:classical_stability_LMI}, we augment $J^*_\gamma$ with a quadratic residual and require the composite function to satisfy  \eqref{eq:relaxed_dec}, leading to a new set of LMIs for stability certification.

\begin{theorem}[\textbf{Generalized Lyapunov Stability for Discounted LQR via LMIs}]
\label{thm:generalized_stability}
Suppose Assumption~\ref{assump:linear_system} holds. Consider an optimal policy $\bfpi_\gamma^*$ and value function $J_\gamma^*$ obtained from the LQR problem with discount $\gamma$. Let \(\bfP\) denote the solution to the undiscounted ARE in \eqref{eq:discounted_are} (with $\gamma = 1$). If there exist symmetric positive definite matrices \(\bfS_0, \bfS_1, \dots, \bfS_M\), scalars \(\varpi, \alpha_1, \dots, \alpha_M > 0\), and weights \(\sigma_1, \dots, \sigma_M \geq 0\) satisfying \(\sum_{i=1}^M \sigma_i \geq M\), such that:
\begin{subequations}
\label{eq:multi_step_lmi_conditions}
\begin{align}
\label{eq:lmi_initial_step}
\begin{bmatrix}
\frac{\sigma_1}{M} \bfA^\top \bfS_0 \bfA - \bfS_0 + \bfS_1 - \varpi \bfQ & \frac{\sigma_1}{M} \bfA^\top \bfS_0 \bfB \\
\frac{\sigma_1}{M} \bfB^\top \bfS_0 \bfA & \frac{\sigma_1}{M} \bfB^\top \bfS_0 \bfB - \varpi \bfR
\end{bmatrix} &\preceq 0, \\[5pt]
\label{eq:lmi_future_steps}
\begin{bmatrix}
\frac{\sigma_{i+1}}{M} \bfA^\top \bfS_0 \bfA - \bfS_{i+1} - \varpi \bfQ & \frac{\sigma_{i+1}}{M} \bfA^\top \bfS_0 \bfB \\
\frac{\sigma_{i+1}}{M} \bfB^\top \bfS_0 \bfA & \frac{\sigma_{i+1}}{M} \bfB^\top \bfS_0 \bfB - \varpi \bfR
\end{bmatrix} &\preceq 0, \quad \forall i = 1, \dots, M-1, \\[5pt]
\label{eq:lmi_are_comparison}
\alpha_i \bfP &\preceq \bfS_i, \quad \forall i = 1, \dots, M,
\end{align}
\end{subequations}
then the function
\begin{equation}
\label{eq:generalized_augmented_function}
V(\bfx) := J^*_\gamma(\bfx) + \frac{1}{\varpi} \bfx^\top \bfS_0 \bfx
\end{equation}
is a generalized Lyapunov function in the sense of Definition~\ref{def:generalized_lyapunov} (with constant weights), and certifies that the origin is globally exponentially stable for the closed-loop system~\eqref{eq:linear_lqr_closed_loop}, provided that:
\begin{equation}
\label{eq:gamma_condition}
\gamma > \max\left( \frac{\sigma_M \varpi}{M \alpha_M}, \dots, \frac{\sigma_2 \varpi}{M \alpha_2}, \frac{\sigma_1 \varpi}{M(\varpi+\alpha_1)} \right).
\end{equation}
\end{theorem}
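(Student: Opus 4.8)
The plan is to reduce the generalized decrease condition \eqref{eq:relaxed_dec} to a statement about quadratic forms along the closed-loop trajectory and to verify it by combining the $M$ matrix inequalities in \eqref{eq:multi_step_lmi_conditions}. Throughout I write $\bfF := \bfF_\gamma^*$, $\bfx_i := \bfF^i\bfx$, $\bfK := \bfK_\gamma^\star$, $\bfL := \bfQ + \bfK^\top\bfR\bfK \succeq 0$, and $J_\gamma^*(\bfx) = \bfx^\top\bfP_\gamma\bfx$, so that $V(\bfx) = \bfx^\top(\bfP_\gamma + \tfrac1\varpi\bfS_0)\bfx$ is quadratic. Two standard facts do most of the work. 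First, the closed-loop Bellman/Riccati identity $\bfP_\gamma = \bfL + \gamma\bfF^\top\bfP_\gamma\bfF$ gives the per-step relation $\gamma J_\gamma^*(\bfx_{i+1}) = J_\gamma^*(\bfx_i) - \ell(\bfx_i,\bfK\bfx_i)$, hence $J_\gamma^*(\bfx_{i+1}) \le \tfrac1\gamma J_\gamma^*(\bfx_i)$ and, iterating, $J_\gamma^*(\bfx_i) \le \gamma^{-i}J_\gamma^*(\bfx)$. Second, under Assumption~\ref{assump:linear_system} the discounted Riccati solution is monotone in the discount, $\bfP_\gamma \preceq \bfP$, so $J_\gamma^*(\bfx) \le \bfx^\top\bfP\bfx$ and the comparisons $\alpha_i\bfP \preceq \bfS_i$ yield $\tfrac1\varpi\bfx^\top\bfS_i\bfx \ge \tfrac{\alpha_i}{\varpi}\bfx^\top\bfP\bfx \ge \tfrac{\alpha_i}{\varpi}J_\gamma^*(\bfx)$. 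Positive definiteness \eqref{eq:pos_def} of $V$ is immediate since $\bfP_\gamma \succeq 0$ and $\bfS_0 \succ 0$, and the same quadratic sandwich will later upgrade asymptotic to exponential decay.

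Next I would evaluate the LMIs along the trajectory at the optimal input $\bfu_i = \bfK\bfx_i$. For a joint vector $(\bfx,\bfu)$ with $\bfu = \bfK\bfx$ one has $\bfA\bfx + \bfB\bfu = \bfF\bfx$ and $\ell(\bfx,\bfK\bfx) = \bfx^\top\bfL\bfx$, so pre/post-multiplying \eqref{eq:lmi_initial_step} by $(\bfx_0,\bfK\bfx_0)$ and \eqref{eq:lmi_future_steps} by $(\bfx_i,\bfK\bfx_i)$ turns each matrix inequality into a scalar inequality relating $\bfx_i^\top\bfS_0\bfx_i$, $\bfx_{i+1}^\top\bfS_0\bfx_{i+1}$, the $\bfS_i$-forms, and $\ell(\bfx_i,\bfK\bfx_i)$. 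I would then add the initial inequality to the $M-1$ future inequalities: the $\bfS_0$-terms assemble exactly into the residual part $\tfrac1M\sum_i\sigma_i\tfrac1\varpi\bfx_i^\top\bfS_0\bfx_i - \tfrac1\varpi\bfx^\top\bfS_0\bfx$ of the generalized decrease, and substituting the Bellman identity $\ell(\bfx_i,\bfK\bfx_i) = J_\gamma^*(\bfx_i) - \gamma J_\gamma^*(\bfx_{i+1})$ telescopes the stage-cost sum. After regrouping, this produces an upper bound of the form $\tfrac1M\sum_i\sigma_i V(\bfx_i) - V(\bfx) \le \sum_{i} c_i(\gamma,\sigma_i)\,J_\gamma^*(\bfx_i) - \tfrac1\varpi\bfx^\top\bfS_1\bfx + \tfrac1\varpi\sum_{i\ge1}\bfx_i^\top\bfS_{i+1}\bfx_i$ with explicit coefficients $c_i$; specializing to $M=1$ should collapse to the single-step estimate behind Theorem~\ref{thm:classical_stability_LMI}.

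To finish, I would dominate the right-hand side termwise. The discounted-value terms are controlled by the contraction $J_\gamma^*(\bfx_i) \le \gamma^{-i}J_\gamma^*(\bfx)$ together with $\bfP_\gamma \preceq \bfP$, while each $\bfS_i$-form is pinned to $\bfx^\top\bfP\bfx$ through $\alpha_i\bfP \preceq \bfS_i$; the step-$1$ block additionally carries the $-\bfS_0$ contribution, which is why its threshold takes the form $\tfrac{\sigma_1\varpi}{M(\varpi+\alpha_1)}$ rather than $\tfrac{\sigma_j\varpi}{M\alpha_j}$. Requiring each grouped coefficient to be negative should reproduce exactly the $M$ inequalities packaged in \eqref{eq:gamma_condition}, so that under that condition $\tfrac1M\sum_i\sigma_i V(\bfx_i) - V(\bfx) < 0$ for all $\bfx \ne \mathbf{0}_n$; with \eqref{eq:pos_def} and $\sum_i\sigma_i \ge M$ this makes $V$ a generalized Lyapunov function (constant weights), and Theorem~\ref{thm:generalized_Lyapunov_GAS} then gives asymptotic stability, promoted to global exponential stability by the quadratic bounds on $V$ and the finite-horizon contraction.

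The hard part will be the passage from the summed inequality to a provably negative bound. The delicate point is that each weight $\sigma_j$, matrix $\bfS_j$, and scalar $\alpha_j$ must be matched to the correct trajectory step, and the cross-step residual forms $\bfx_i^\top\bfS_{i+1}\bfx_i$ must be absorbed using only the one-sided comparisons $\alpha_i\bfP\preceq\bfS_i$ and the discount contraction. Getting this pairing right — so that the intermediate residuals, the telescoped stage costs, and the value-function terms combine into coefficients whose negativity is equivalent to \eqref{eq:gamma_condition} — is where essentially all of the bookkeeping effort lies, and it is the step I would treat most carefully (in particular verifying that the per-step thresholds, and not a looser joint condition, suffice).
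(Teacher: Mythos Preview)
Your high-level decomposition — split $V=J_\gamma^*+W$, evaluate each LMI at $(\bfx_i,\bfK\bfx_i)$ and sum so that the $\bfS_0$-terms reproduce the residual part of the generalized decrease, then pair the leftover $\bfS_i$-forms with value-function terms via $\bfP_\gamma\preceq\bfP$ and $\alpha_i\bfP\preceq\bfS_i$ — is exactly the paper's route (the paper carries it out explicitly only for $M=2$).

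There is, however, one step in your plan that would not reproduce the stated thresholds. You propose to control the value-function side with the \emph{iterated} contraction $J_\gamma^*(\bfx_i)\le\gamma^{-i}J_\gamma^*(\bfx_0)$ and then ``pin'' everything to $\bfx_0^\top\bfP\bfx_0$. The paper does \emph{not} collapse to time~$0$: it applies a \emph{single} Bellman step to each $\tfrac{\sigma_i}{M}J_\gamma^*(\bfx_i)$, landing those terms at times $0,1,\dots,M-1$, i.e.
\[
\frac{1}{M}\sum_{i=1}^M\sigma_iJ_\gamma^*(\bfx_i)-J_\gamma^*(\bfx_0)
=\Bigl(\tfrac{\sigma_1}{M}\gamma^{-1}-1\Bigr)J_\gamma^*(\bfx_0)+\sum_{i=1}^{M-1}\tfrac{\sigma_{i+1}}{M}\gamma^{-1}J_\gamma^*(\bfx_i)-\gamma^{-1}\sum_{i=0}^{M-1}\tfrac{\sigma_{i+1}}{M}\ell(\bfx_i,\bfK\bfx_i).
\]
The point is that the $J_\gamma^*(\bfx_i)$ term (bounded by $\bfx_i^\top\bfP\bfx_i$) must sit at the \emph{same} time step as the residual form $\tfrac1\varpi\bfx_i^\top\bfS_{i+1}\bfx_i$ so that the pair combines into the block $\bigl(\tfrac{\sigma_{i+1}}{M}\gamma^{-1}\bfP-\tfrac1\varpi\bfS_{i+1}\bigr)$ at $\bfx_i$; requiring this block to be negative definite is precisely what yields $\gamma>\tfrac{\sigma_{i+1}\varpi}{M\alpha_{i+1}}$. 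If you instead push $J_\gamma^*(\bfx_i)$ all the way back to $\bfx_0$ via $\gamma^{-i}$, the $\bfS_{i+1}$-form at $\bfx_i$ has nothing of the right sign to pair with (note $\alpha_i\bfP\preceq\bfS_i$ is only a \emph{lower} bound, so it cannot absorb a positive $\bfS_i$-form on its own), and the resulting conditions would involve $\gamma^{i}$ rather than the linear per-step thresholds in \eqref{eq:gamma_condition}. Likewise, the stage-cost cancellation relies on the $\tfrac{\sigma_{i+1}}{M}\gamma^{-1}$ weighting produced by the single Bellman step matching the unweighted $\ell(\bfx_i)$ from the $i$th LMI, together with $(1-\gamma^{-1}\tfrac{\sigma_{i+1}}{M})\le(1-\gamma^{-1})\cdot\text{(something nonnegative)}$-type estimates; a full telescoping to time~$0$ does not produce that structure.

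In short: keep the analysis \emph{distributed over time steps} — one Bellman step per term, one LMI per step, one $\alpha_i$-comparison per step — rather than contracting to $\bfx_0$. With that adjustment your outline coincides with the paper's argument, and your caution about the bookkeeping (matching $\sigma_j$, $\bfS_j$, $\alpha_j$ to the correct step) is exactly where the work lies.
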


The proof is provided in Appendix~\ref{appendix:generalized_stability_proof}.

\begin{remark}[\textbf{Feasibility of Multi-Step LMIs}]
\label{remark:multistep_feasibility}
Note that when \(\sigma_1 = M\) and $\sigma_i = 0$ for all $2 \le i \le M$, Theorem \ref{thm:generalized_stability} recovers the classical Lyapunov result in Theorem~\ref{thm:classical_stability_LMI}. Therefore, if the original one-step LMIs \eqref{eq:lmi_1step} are feasible, then feasible solutions for \eqref{eq:multi_step_lmi_conditions} also exist. Empirically, the multi-step formulation can certify stability in additional cases where the one-step condition is infeasible, effectively enlarging the set of stabilizable systems. Formal characterization of this weaker feasibility property is left for future work.
\end{remark}

\begin{remark}[\textbf{Choice of \(\sigma_i\) Weights}]
While the multi-step formulation enables optimizing the weights \(\sigma_1, \dots, \sigma_M\) to minimize the certified lower bound on \(\gamma\) in \eqref{eq:gamma_condition}, finding globally optimal weights is challenging due to its non-convexity nature. In practice, heuristics such as grid search for small \(M\) and random sampling with local refinements for larger \(M\) are sufficient to achieve noticeable improvements over the one-step baseline.
\end{remark}

\begin{example}[\textbf{Example~\ref{example:scalar_lqr} Revisited}]
We illustrate the benefits of the generalized multi-step LMIs from Theorem~\ref{thm:generalized_stability} in Example~\ref{example:scalar_lqr}. We first focus on the case \(M=2\). Figure~\ref{fig:gamma_vs_sigma} shows how the certified bound on \(\gamma\) varies with the choice of weights \(\sigma_1\) and \(\sigma_2\), constrained by \(\sigma_1 + \sigma_2 = 2\). Since the undiscounted value function (\(\gamma = 1\)) satisfies the Lyapunov condition, this bound is capped at \(\gamma = 1\). The classical Lyapunov setting \((\sigma_1, \sigma_2) = (2, 0)\) yields a conservative bound of \(\gamma > 0.8090\), whereas optimizing over \(\sigma_1\) and \(\sigma_2\) improves the bound to \(\gamma > 0.6229\) at \((\sigma_1, \sigma_2) = (1.54, 0.46)\).
% \NA{Do we need this sentence? I think it is confusing since we are looking for smaller $\gamma$. Either move it earlier or remove it altogether.}. 
Another important observation is that increasing \(M\) significantly reduces the certified lower bound on \(\gamma\), progressively approaching the true stability threshold \(\gamma^\star = 1/3\), as shown in Figure~\ref{fig:gamma_vs_M}.
\end{example}

\begin{figure}[h]
    \centering
    \begin{minipage}{0.47\textwidth}
        \centering
        \includegraphics[width=\textwidth]{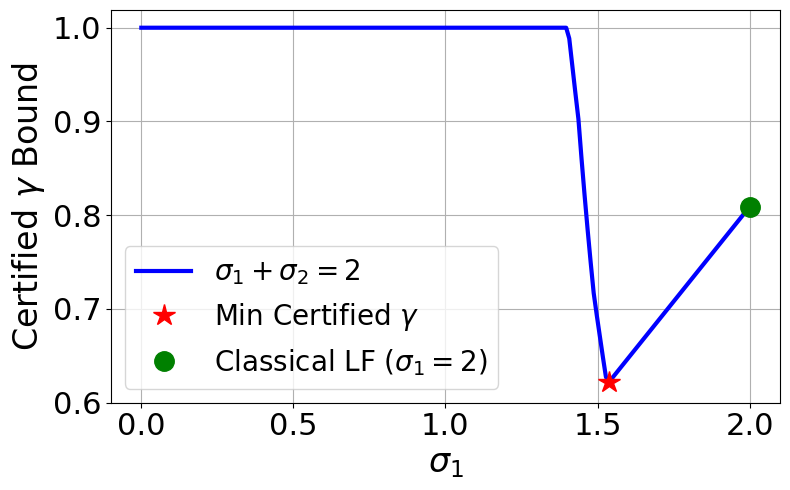}
        \caption{ Certified \(\gamma\) bound for \(M=2\).}
        \label{fig:gamma_vs_sigma}
    \end{minipage}%
    \hfill%
    \begin{minipage}{0.47\textwidth}
        \centering
        \includegraphics[width=\textwidth]{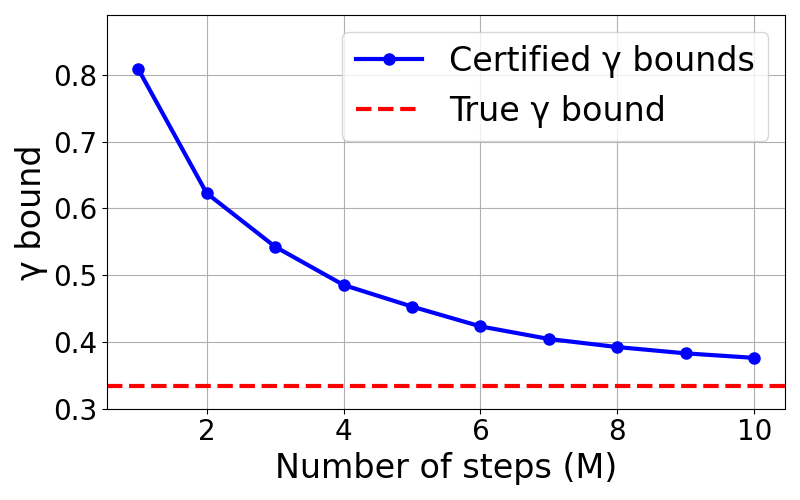}
        \caption{Certified  \(\gamma\) bound versus \(M\).}
        \label{fig:gamma_vs_M}
    \end{minipage}
    \vspace{-2ex}
\end{figure}

\section{Nonlinear Systems: Stability Certification of RL Policies}
\label{sec:nonlinear_rl}

Inspired by the treatment of linear systems described in Section~\ref{sec:linear_generalized_stability}, we first formulate a post-hoc approach to certify stability of policies for nonlinear systems obtained by RL using a generalized Lyapunov function. The key observation from Theorem~\ref{thm:generalized_stability} is that augmenting the optimal value function with a residual term can result in a valid generalized Lyapunov function, as in \eqref{eq:generalized_augmented_function}. Here, we use the same idea to form generalized Lyapunov functions for nonlinear systems with unknown dynamics, which is a problem considered by RL. The value function and policy obtained by an RL algorithm are typically parameterized by neural networks. Let \( \bfpi_{\text{RL}} \) be a pre-trained RL policy (e.g., obtained by \citep{schulman2017proximal, haarnoja2018soft, Hansen2022tdmpc}), and let \( J^{\bfpi_{\text{RL}}}_{\gamma} \) denote its corresponding learned value function. We consider a generalized Lyapunov candidate as:
\begin{equation}
\label{eq:rl_generalized_V}
V(\bfx; \bftheta_1) = J^{\bfpi_{\text{RL}}}_{\gamma}(\bfx) + \varphi(\bfx; \bftheta_1),
\end{equation}
where \( \varphi(\bfx; \bftheta_1) \) is a neural residual correction. To allow more flexibility in the generalized Lyapunov condition~\eqref{eq:relaxed_dec}, we introduce a step-weighting network \( \sigma(\bfx; \bftheta_2) \in \mathbb{R}_{\ge 0}^M \) that outputs non-negative weights over a rollout horizon of length \( M \). The weights are required to satisfy 
% \begin{equation}
% \label{eq:weights_condition_nn}
$\sum_{i=1}^M \sigma_i(\bfx; \bftheta_2) \ge M$.
% \end{equation}
These weights are then used to construct the generalized Lyapunov decrease condition:
\begin{equation}
\label{eq:lyapunov_residual}
F(\bfx_k) := \frac{1}{M} \biggl(\sum_{i=1}^{M} \sigma_i(\bfx_k; \bftheta_2) \, V(\bfx_{k+i}; \bftheta_1)\biggr) - (1 - \bar{\alpha}) \, V(\bfx_k; \bftheta_1),
\end{equation}
where \( \bar{\alpha} \in (0, 1) \) is a user-specified decay parameter. We train the networks \( \varphi(\cdot; \bftheta_1) \) and \( \sigma(\cdot; \bftheta_2) \) jointly by minimizing a loss that penalizes violations of the generalized Lyapunov condition:
\begin{equation}
\label{eq:training_loss}
\mathcal{L}(\bftheta_1, \bftheta_2) := \frac{1}{N} \sum_{i=1}^N \text{ReLU}\big( F(\bfx_k^{(i)}) \big),
\end{equation}
where \( \{ \bfx_k^{(i)} \}_{i=1}^N \) are sampled initial states.

\begin{remark}[\textbf{Network Architecture}]
In practice, the learned value function \( J^{\bfpi_{\text{RL}}}_\gamma \) may not be optimal, hence, may not satisfy \( J^{\bfpi_{\text{RL}}}_\gamma(\mathbf{0}_n) = 0 \). To ensure that the generalized Lyapunov candidate satisfies \( V(\mathbf{0}_n; \bftheta_1) = 0 \) and is positive definite, we modify \eqref{eq:rl_generalized_V} as:
\begin{equation}
\label{eq:practical_V}
V(\bfx; \bftheta_1) := \left| J^{\bfpi_{\text{RL}}}_\gamma(\bfx) - J^{\bfpi_{\text{RL}}}_\gamma(\mathbf{0}_n) \right| + \left| \varphi(\bfx; \bftheta_1) - \varphi(\mathbf{0}_n; \bftheta_1) \right| + \beta \| \bfx \|^2,
\end{equation}
where \( \beta > 0 \) is a small constant (the term $\beta \| \bfx \|^2$ enforces strict positive definiteness for $\bfx \neq \mathbf{0}_n$ and improves numerical stability near the origin.)
%
% \marginJC{Explain why the addition/what's the role of the $\hat{\alpha} \| \bfx \|^2$ term}
%
The step-weighting network \( \sigma(\bfx; \bftheta_2) \in \mathbb{R}^M_{\ge 0} \) ends with a softmax layer scaled by \( M \), ensuring the output weights satisfy \( \frac{1}{M} \sum_{i=1}^M \sigma_i(\bfx; \bftheta_2) = 1 \).
\end{remark}

\begin{remark}[\textbf{Equilibrium Behavior}]
\label{rem:remove_origin_ball}
In practice, RL policies often converge to a neighborhood near the origin rather than the origin itself. Thus, we train and verify the generalized Lyapunov condition over the set \( \mathcal{X} \setminus \calB(\mathbf{0}_n; \delta) \), where \( \calB(\mathbf{0}_n; \delta) \) denotes a ball of radius \( \delta \) around the origin.
\end{remark}

\textbf{Training and Evaluation Setup.}  
We evaluate our method on two standard RL control benchmarks from Gymnasium \citep{towers2024gymnasium} and the DeepMind Control Suite \citep{tassa2018deepmind_controlsuite}. RL policies $\bfpi_{\text{RL}}$ and their corresponding value functions $J^{\bfpi_{\text{RL}}}_\gamma$ are trained using implementations from \citet{stable-baselines3, Hansen2022tdmpc}. We collect a total of $N$ rollout trajectories by simulating the closed-loop system under $\bfpi_{\text{RL}}$, with each trajectory $\tau^{(i)} = \{\bfx_k^{(i)}\}_{k=0}^M$ starting from a randomly sampled initial state $\bfx_0^{(i)}$.  Additional training and architecture details are provided in Appendix~\ref{sec: appendix_rl_certificate_training}.

%
% \marginJC{We cannot formally prove things for a given RL policy, but we can formally prove things for the more complex case of finding the RL policy and the certificate?}
% \KL{For the joint synthesis, I think it only works for fixed $\sigma$ weights for now.}
% %
% \marginJC{I still have the same question even if it is only for fixed $\sigma$. }
%

% \textbf{Evaluation.}
% We evaluate our method on two benchmark control tasks from the Gymnasium~\citep{towers2024gymnasium} and DeepMind Control Suite~\citep{tassa2018deepmind_controlsuite}\NA{This is repetitive. Remove this subsection and merge with what is above.}. As suggested in Remark \ref{rem:remove_origin_ball}, we exclude a small ball around the origin during training and testing, with \( \delta = 0.05 \) for the inverted pendulum and \( \delta = 0.3 \) for the cartpole. 

\textbf{Inverted Pendulum Swingup.}
We consider the inverted pendulum environment from \citet{towers2024gymnasium} with parameters \( m = 1 \), \( l = 1 \), \( g = 10 \), and control limits \( |u| \leq \frac{mgl}{5} = 2 \). The state space is \([-\pi, \pi) \times [-8, 8]\). Due to tight torque limits, the pendulum must swing back and forth to build momentum before reaching the upright position. This makes it challenging to synthesize a policy with a Lyapunov certificate valid over the entire state space. Prior work \citep{yang2024lyapunovstable, long_2024_dr_synthesis} has shown that regions where stability can be verified are typically restricted to small neighborhoods near the upright position, failing to cover the full swing-up trajectories. In contrast, modern RL policies can discover effective swing-up behaviors but lack stability guarantees. Our method bridges this gap: we take a trained SAC policy \citet{haarnoja2018soft}
% \NA{Why not directly say here that we use SAC, i.e., merge this and the next sentence?} 
and apply our generalized Lyapunov function training with \( M = 15 \).
%
% \marginJC{I think we could spend a bit more time describing how these approaches tend to miss those "swing-up" manouevers, and how instead PPO captures them, but has no guarantees. We hit the sweet spot between both, so I think it's nice if we articulate that.}
%
Figure~\ref{fig:rl_pendulum_trajectories} shows the generalized Lyapunov function values along several trajectories from different initial states. As expected, the function exhibits non-monotonic behavior but shows an overall decline over the planning horizon. Figure~\ref{fig:rl_pendulum_total_lyapunov} visualizes the Lyapunov function across the state space. Figure~\ref{fig:rl_pendulum_value_differences} plots the residual \( F(\bfx_k) \) defined in \eqref{eq:lyapunov_residual}, verifying that the generalized decrease condition is satisfied throughout the domain.
%
% \marginJC{Which is great. But this "satisfied throughout the domain" is not something we've shown theoretically, right? We should mention it in the conclusions/future work.}
%

\begin{figure}[h]
    \centering
    \begin{minipage}{0.32\linewidth}
        \centering
        \includegraphics[width=\linewidth]{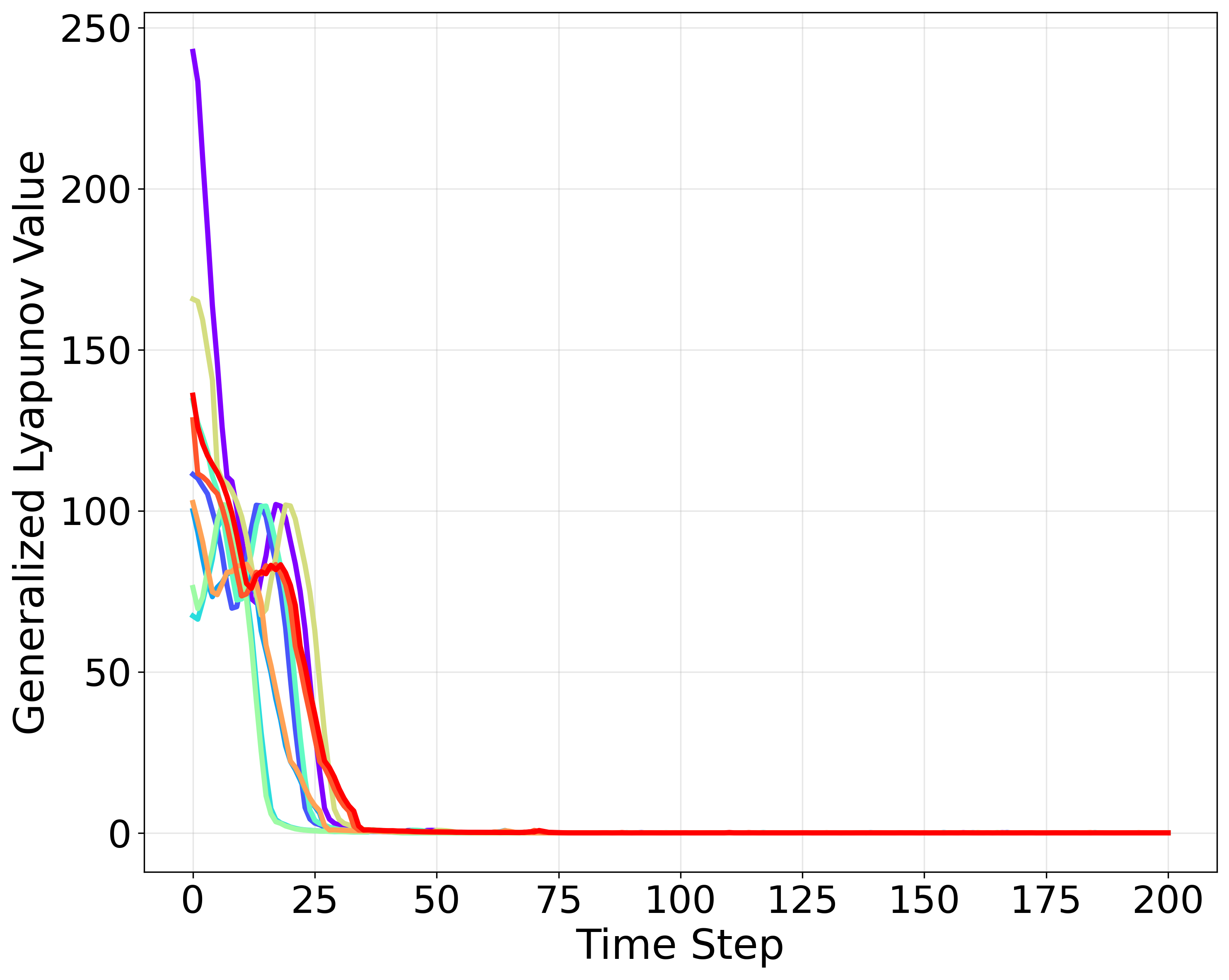}
        \caption{Generalized Lyapunov function values along trajectories. }
        \label{fig:rl_pendulum_trajectories}
    \end{minipage}%
    \hfill%
    \begin{minipage}{0.32\linewidth}
        \centering
        \includegraphics[width=\linewidth]{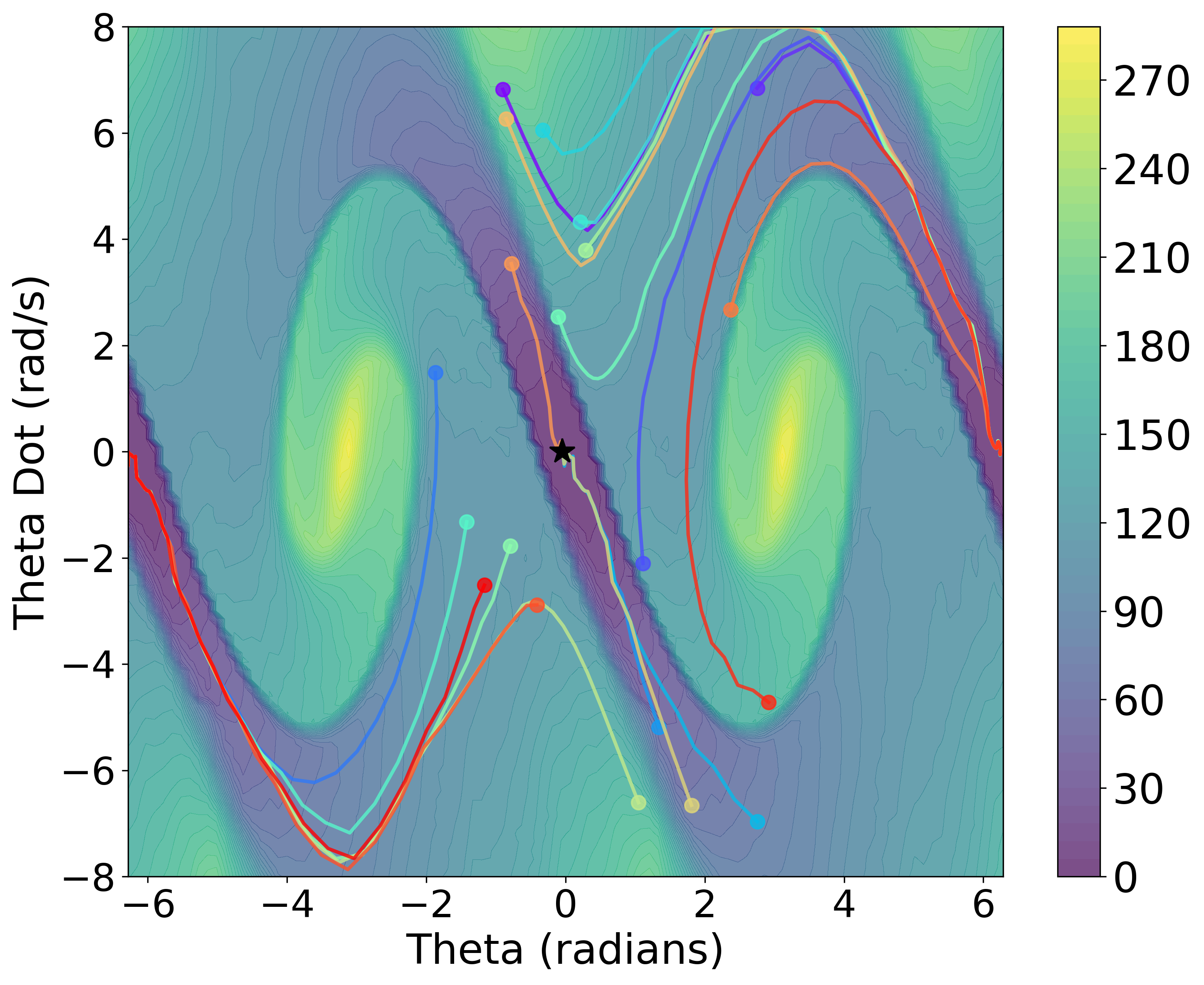}
        \caption{Generalized Lyapunov function value  over the state space.}
        \label{fig:rl_pendulum_total_lyapunov}
    \end{minipage}%
    \hfill%
    \begin{minipage}{0.32\linewidth}
        \centering
        \includegraphics[width=\linewidth]{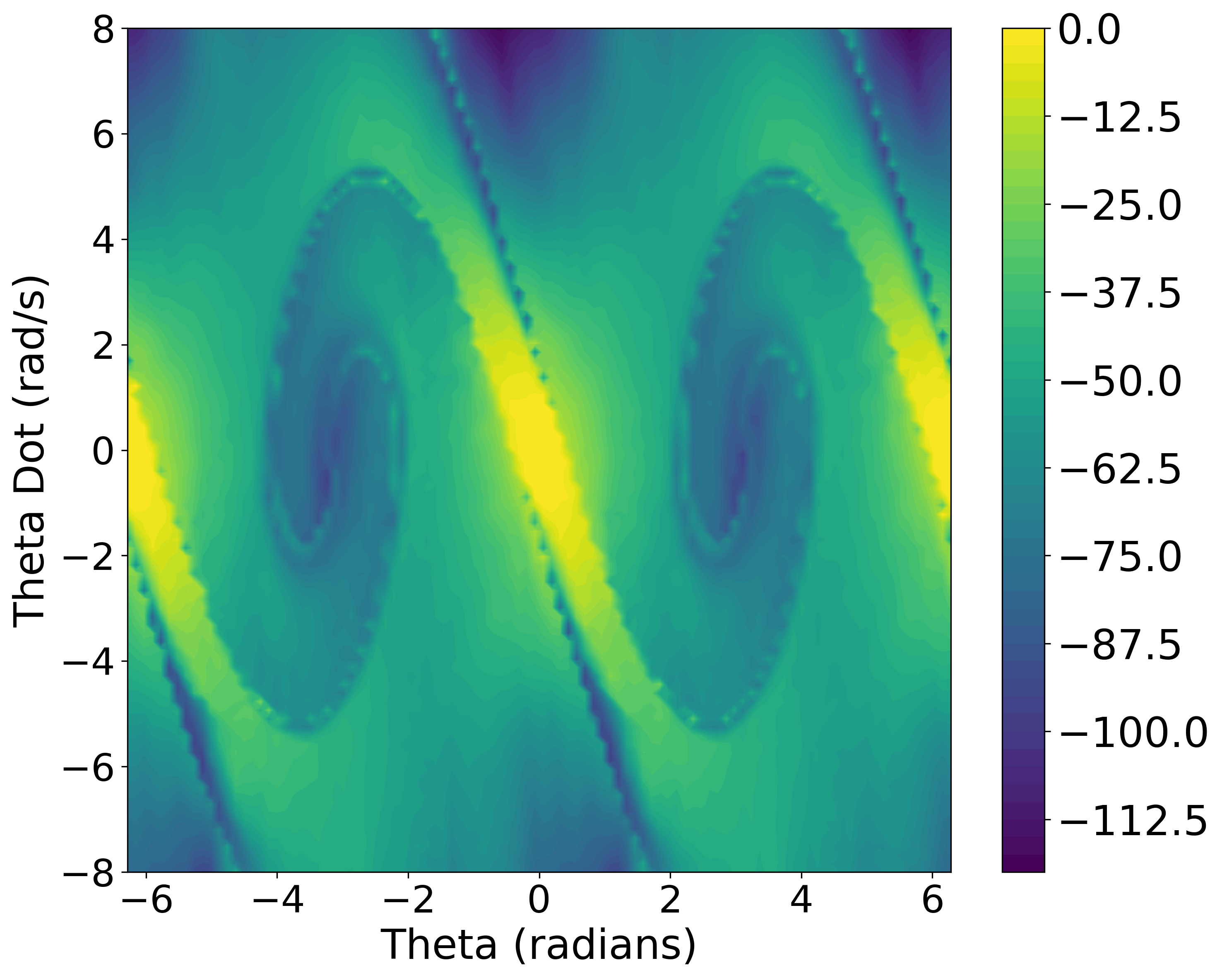}
        \caption{Generalized Lyapunov decrease condition.}
        \label{fig:rl_pendulum_value_differences}
    \end{minipage}
    \vspace*{-1ex}
\end{figure}

\textbf{Cartpole Swingup.}
We consider the cartpole swing-up task from \citet{tassa2018deepmind_controlsuite}. The system state is represented as \([x, \cos(\theta), \sin(\theta), \dot{x}, \dot{\theta}]\), where \(x\) is the cart position, \(\theta\) is the pole angle, and the remaining terms are their velocities. The goal is to swing the pole upright \(\theta = 0\) and stabilize the position at \(x = 0\). We use a TD-MPC policy \citep{Hansen2022tdmpc} and apply our certificate training \eqref{eq:training_loss} with \( M = 20 \). Figure~\ref{fig:cartpole_value_slices} visualize the generalized Lyapunov condition across three representative 2D slices of the state space, with the remaining two states fixed at zero. The generalized decrease condition is satisfied throughout these slices, suggesting asymptotic stability of the RL policy.

\begin{figure}[h]
    \centering
    \begin{minipage}{0.32\linewidth}
        \centering
        \includegraphics[width=\linewidth]{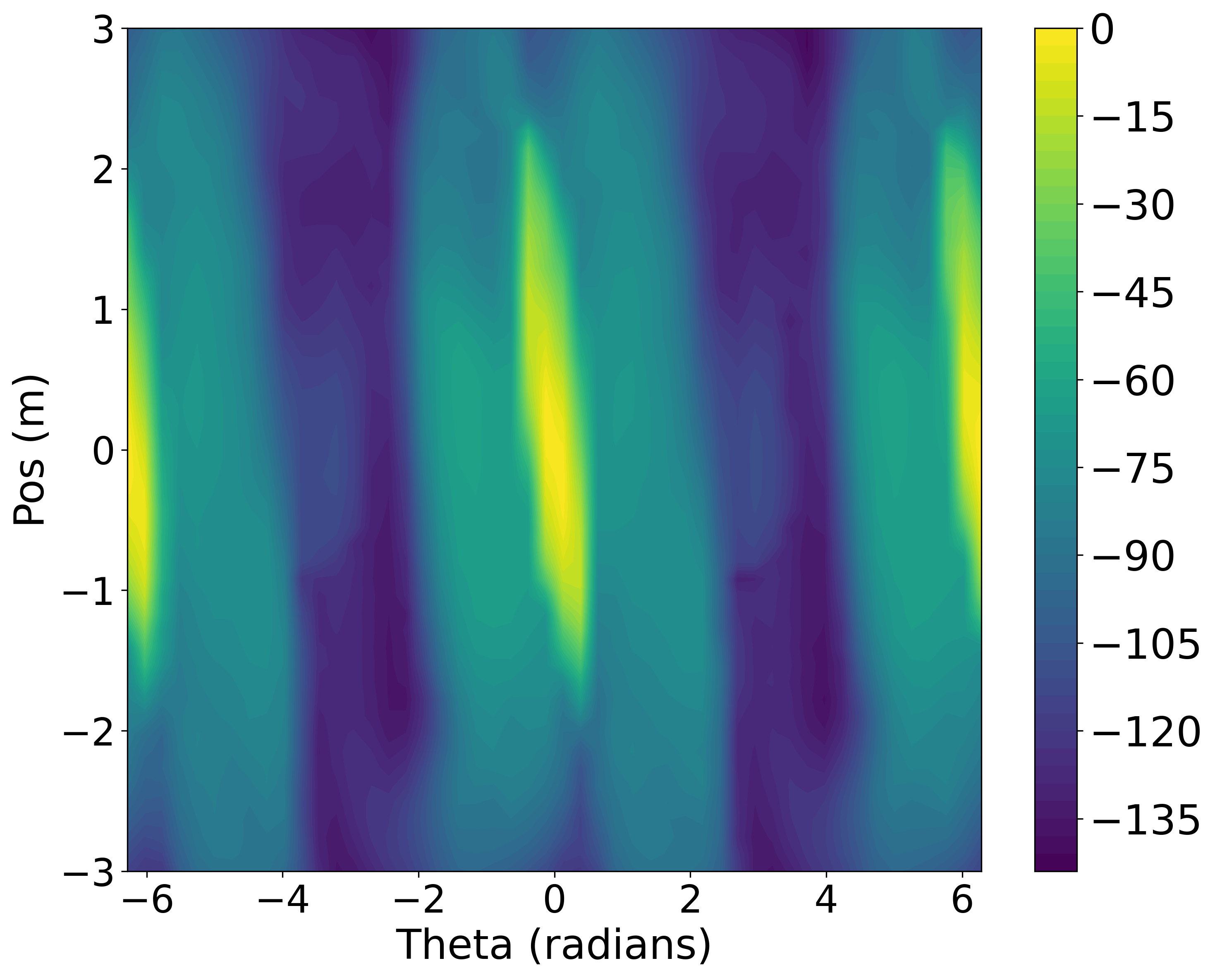}\label{fig:cartpole_value_slices_a}
    \end{minipage}%
    \hfill%
    \begin{minipage}{0.32\linewidth}
        \centering
        \includegraphics[width=\linewidth]{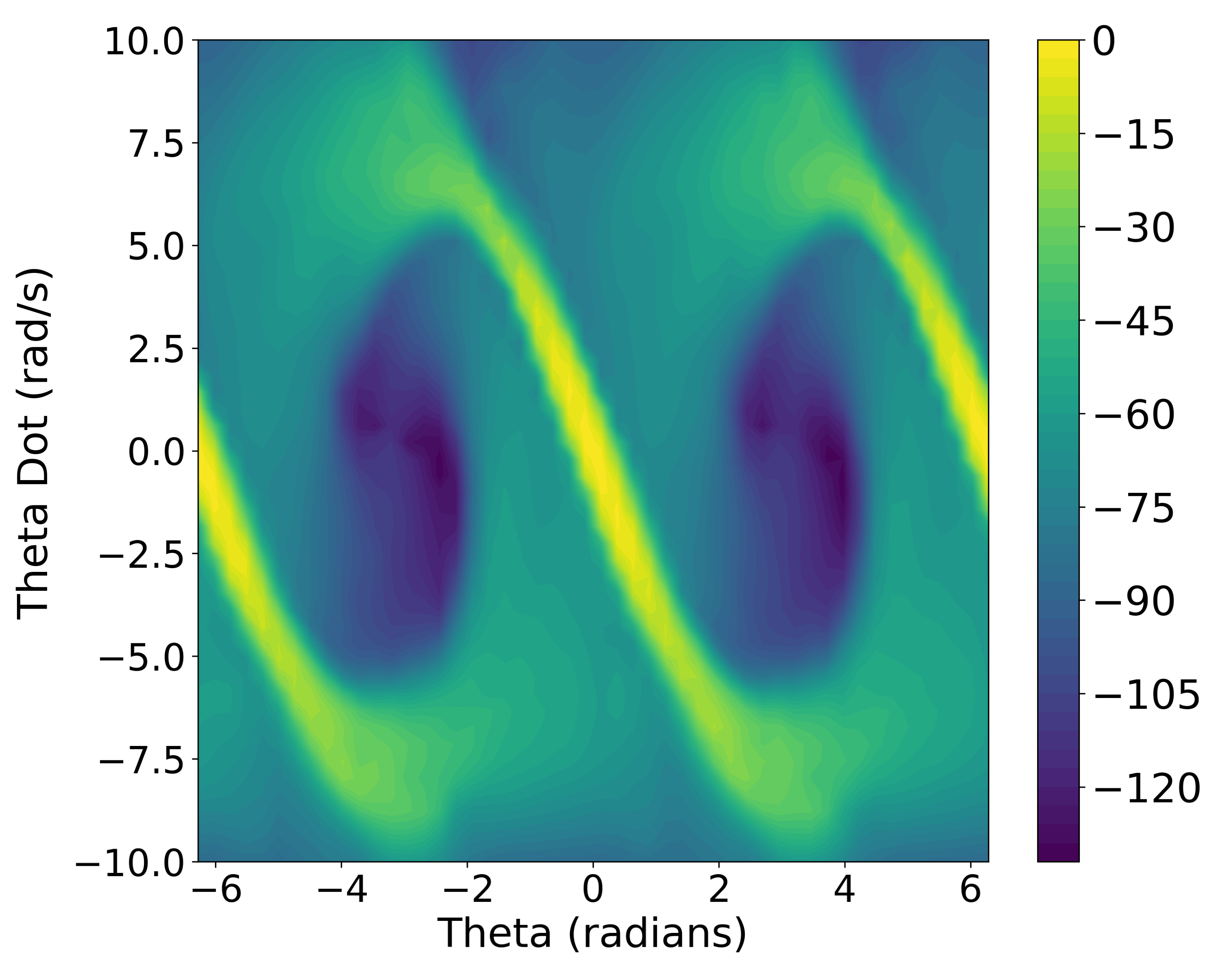}\label{fig:cartpole_value_slices_b}
    \end{minipage}%
    \hfill%
    \begin{minipage}{0.32\linewidth}
        \centering
        \includegraphics[width=\linewidth]{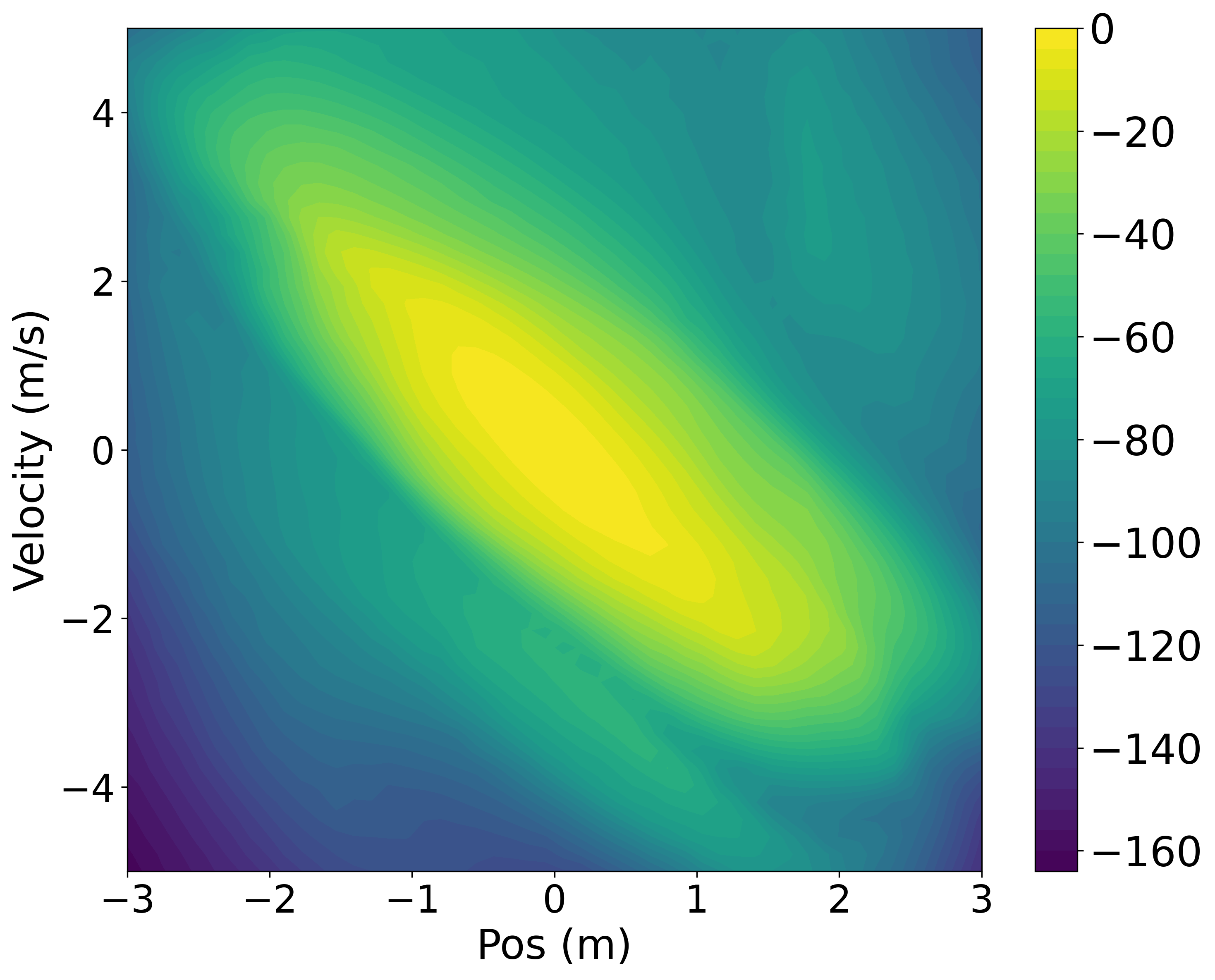}\label{fig:cartpole_value_slices_c}
    \end{minipage}
    \caption{Generalized Lyapunov decrease condition for the cartpole swing-up using a TD-MPC policy.}
    \label{fig:cartpole_value_slices}
    %\vspace*{-2ex}
\end{figure}

In addition to qualitative visualizations, we quantitatively evaluate our learned certificates by sampling \( N_{\text{test}} \) states from the full state space and checking whether \eqref{eq:lyapunov_residual} is satisfied. For each environment, we train certificates for multiple RL policies (the PPO policy fails to stabilize the cartpole from some initial states). As shown in Table~\ref{tab:lyapunov_verification}, the condition holds for all test states in both environments. 

% We note that for the cartpole environment, the trained PPO policy fails to stabilize the system from some initial states.

% \NA{So why not show the success rate for PPO?}. 

\vspace*{-1ex}
\begin{table}[h]
\centering
\caption{Percentage of sampled test states satisfying \( F(\bfx_k) \le 0 \) under different RL policies.}
\label{tab:lyapunov_verification}
{\small
\begin{tabular}{lcccc}
\toprule
\textbf{Environment} & \textbf{RL Methods} & \(\boldsymbol{M}\) & \(\boldsymbol{N_{\text{test}}}\) & \textbf{\% Satisfying \(F(\bfx_k) \le 0\)} \\
\midrule
Inverted Pendulum & PPO, SAC, TD-MPC & 15 & 10,000     & 100\% \\
Cartpole          & SAC, TD-MPC      & 20 & 1,000,000  & 100\% \\
% Acrobot           & TDMPC      & 25 & 1,000,000  & 100\% \\
\bottomrule
\end{tabular}
}
\vspace*{-2ex}
\end{table}

\paragraph{Concentration of Learned Step-Weights.}
To further analyze the learned generalized Lyapunov function, we examine the distribution of the learned step-weights across the horizon. For each trained policy, the step-weight network was evaluated over $10,000$ uniformly sampled states from the state space, and the normalized weights were averaged across samples. The weights were then grouped into five equal bins corresponding to different segments of the Lyapunov horizon. Table~\ref{tab:weight_concentration} summarizes the fraction of the total weight assigned to each bin.

\begin{table}[h]
    \centering
    \caption{Average fraction of total step-weight assigned to each portion of the rollout horizon. Results are averaged over 10,000 test states for each policy.}
    \label{tab:weight_concentration}
    \vspace{0.5em}
    \begin{tabular}{lccccc}
        \toprule
        Environment + Policy & Steps 0--20\% & 20--40\% & 40--60\% & 60--80\% & 80--100\% \\
        \midrule
        Pendulum + PPO     & 12.3 & 11.4 & 16.7 & 21.8 & \textbf{37.8} \\
        Pendulum + SAC     & 10.2 & 13.3 & 17.5 & 22.6 & \textbf{36.4} \\
        Cartpole + SAC     & 10.8 & 10.5 & 16.2 & 25.6 & \textbf{36.1} \\
        Cartpole + TD-MPC  & 14.1 & 13.9 & 16.8 & 24.7 & \textbf{30.5} \\
        \bottomrule
    \end{tabular}
\end{table}

As shown in Table \ref{tab:weight_concentration}, the consistent concentration of weights toward the end of the horizon reveals a key strategy learned by our method. By assigning lower weights to initial steps and higher weights to later ones, the network effectively learns to tolerate initial non-monotonic transients in the Lyapunov candidate, focusing instead on ensuring a decisive, weighted decrease over the full horizon. This behavior empirically realizes the central benefit of the generalized Lyapunov condition: providing a flexible yet formal basis for certifying complex neural policies.

%--------------------------------------------------------------------

\section{Joint Synthesis of Stable Neural Policies and Certificates}
\label{sec:synthesis}

So far, we addressed the certification of stability for fixed pre-trained control policies. In this section, we consider \emph{joint synthesis} of neural controllers and Lyapunov certificates and employ formal verification inspired by \citet{Chang2019NeuralLC, dai_2021_lyapunov, wu2023neural, yang2024lyapunovstable}. Compared to prior methods, our generalized Lyapunov framework naturally extends this setting by replacing the standard one-step decrease condition with the proposed multi-step, weighted formulation in \eqref{eq:relaxed_dec}.

\begin{definition}[\textbf{Region of Attraction}]
\label{def:roa}
The \emph{region of attraction (ROA)}  \( \mathcal{R} \subseteq \mathbb{R}^n \) of the  (locally asymptotically stable) origin for the discrete-time system \eqref{eq:closed_loop} is the set of all points from which the system trajectory converges to the origin, i.e., \( \bfx_0 \in \mathcal{R} \) implies \( \lim_{k \to \infty} \bfx_k = \mathbf{0}_n \).
\end{definition}

%
% \marginJC{Several problems w/ this definition. There is only 1 region of attraction. Of course, you can estimate it, or approximate it. I think it's better to define 1 RofA, and say that we innerapproximate it in what follows. Second problem is that I'd define RofA not just for an equilibrium, but for a locally asymptotically stable equilibrium.}
%

In general, precise characterizations of $\mathcal{R}$ are challenging to obtain and one instead looks for suitable approximations. Here, 
we seek to jointly learn a policy \(\bfpi(\bfx;\boldsymbol{\phi})\) and a certificate function \(V(\bfx;\bftheta_1)\) such that the closed-loop system \(\bfx_{k+1} = \bff(\bfx_k, \bfpi_\phi(\bfx_k))\) is provably asymptotically stable and obtain at the same time an inner approximation \(\mathcal{S}\) of the ROA \(\mathcal{R}\).
%
% \marginJC{AGain, over an inner approximation of the RoA, and to maximize it (it even makes more sense then to say "inner approx", since then clearly the right thing to do is to make it as large as possible). I won't write more margins about this RoA thing, but if we make the changes in the def above, you need to propagate. }
%
\citet{yang2024lyapunovstable} formalized it as a constrained optimization that maximizes the volume of a Lyapunov sublevel set \( \mathcal{S} = \{ \bfx \in \mathbb{R}^n : V(\bfx) \le \rho \} \) under Lyapunov constraints
\begin{subequations}
\label{eq:roa_opt_synthesis}
\begin{align}
\max_{\bftheta_1, \boldsymbol{\phi}} \quad & \mathrm{Vol}(\mathcal{S}) \label{eq:roa_synthesis_obj} \\
\text{s.t.} \quad & V(\mathbf{0}_n; \bftheta_1) = 0,\quad V(\bfx; \bftheta_1) > 0 \quad \forall \bfx \in \mathcal{S} \setminus \{\boldsymbol{0}_n\}, \label{eq:roa_synthesis_posdef} \\
& V(\bfx_{k+1}; \bftheta_1) - (1 - \bar{\alpha} )V(\bfx_k; \bftheta_1) \le 0 \quad \forall \bfx_k \in \mathcal{S},  \label{eq:roa_synthesis_decrease}
\end{align}
\end{subequations}
where \( \bar{\alpha} \in (0, 1) \) is a user-specified decay parameter. To utilize a generalized Lyapunov function, we replace \eqref{eq:roa_synthesis_decrease} with the generalized \(M\)-step condition for each $\bfx_k \in \mathcal{S}$, with $\sum_{i=1}^M \sigma_i(\bfx_k) \ge M$:
\begin{equation}
\label{eq:generalized_loss}
F(\bfx_k) := \frac{1}{M} \biggl(\sum_{i=1}^{M} \sigma_i(\bfx_k) V(\bfx_{k+i}; \bftheta_1)\biggr) - ( 1- \bar{\alpha})V(\bfx_k; \bftheta_1) \le 0, \; \; \sigma_i(\bfx_k) \ge \underline{\sigma} >  0,
\end{equation}
where $\underline{\sigma} \in \bbR_{>0}$ is a uniform lower bound on the weights. The condition~\eqref{eq:generalized_loss} enables learning certificates that tolerate non-monotonic behavior along a trajectory, but it no longer guarantees that \( \mathcal{S} \) is \emph{forward invariant}. However, as Theorem~\ref{thm:asymptotic_stability_sublevel} shows, \( \mathcal{S} \) still defines a valid inner approximation of the ROA and guarantees asymptotic stability of the origin.

\begin{theorem}[\textbf{Asymptotic Stability Relative to \(\mathcal{S}\)}]
\label{thm:asymptotic_stability_sublevel}
If the optimization problem~\eqref{eq:roa_opt_synthesis} is solved with the generalized condition~\eqref{eq:generalized_loss} instead of \eqref{eq:roa_synthesis_decrease}, then $\calS \subset \calR$.
\end{theorem}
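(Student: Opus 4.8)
The plan is to prove $\calS \subset \calR$ directly, i.e., to show that every trajectory starting at an arbitrary $\bfx_0 \in \calS$ satisfies $\lim_{k\to\infty}\bfx_k = \mathbf{0}_n$, \emph{without} relying on forward invariance of $\calS$ (which, as noted in the text, fails under \eqref{eq:generalized_loss}). The engine of the argument is a one-good-step lemma: for any $\bfx_k \in \calS$, among the next $M$ states at least one has strictly smaller value. Writing $w_i = \sigma_i(\bfx_k)\ge\underline{\sigma}>0$, $W=\sum_{i=1}^M w_i \ge M$, and $V_i = V(\bfx_{k+i})\ge 0$, the condition \eqref{eq:generalized_loss} reads $\sum_i w_i V_i \le M(1-\bar{\alpha})V(\bfx_k)$. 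Since a nonnegative-weighted average dominates the minimum, $\min_{1\le i\le M} V_i \le \frac{\sum_i w_i V_i}{W} \le \frac{M(1-\bar{\alpha})V(\bfx_k)}{W} \le (1-\bar{\alpha})V(\bfx_k)$, where the final step uses $W\ge M$. Hence there is an index $i^\star\in\{1,\dots,M\}$ with $V(\bfx_{k+i^\star})\le(1-\bar{\alpha})V(\bfx_k)$; because $\bar{\alpha}\in(0,1)$ this is a strict decrease, and $V(\bfx_{k+i^\star})\le(1-\bar{\alpha})\rho<\rho$ forces $\bfx_{k+i^\star}\in\calS$ again.

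Next I would iterate this lemma from $k_0=0$ to build a subsequence $0=k_0<k_1<k_2<\cdots$ with $k_{j+1}-k_j\in\{1,\dots,M\}$, every $\bfx_{k_j}\in\calS$, and $V(\bfx_{k_{j+1}})\le(1-\bar{\alpha})V(\bfx_{k_j})$. This yields the geometric bound $V(\bfx_{k_j})\le(1-\bar{\alpha})^j V(\bfx_0)\to 0$. Using that $V$ is continuous and positive definite on the (compact) sublevel set $\calS$ by \eqref{eq:roa_synthesis_posdef}, so that it admits a class-$\mathcal{K}$ lower bound $\underline{a}(\norm{\bfx})\le V(\bfx)$ on $\calS$, the vanishing of $V(\bfx_{k_j})$ forces $\bfx_{k_j}\to\mathbf{0}_n$ along the subsequence.

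To upgrade subsequence convergence to convergence of the whole trajectory, I would control the excursions between consecutive good steps using the uniform weight floor $\underline{\sigma}$. For any $\bfx_{k_j}\in\calS$, dropping all but one term in \eqref{eq:generalized_loss} and using $\sigma_i\ge\underline{\sigma}$, $V\ge0$ gives $V(\bfx_{k_j+i})\le \frac{M(1-\bar{\alpha})}{\underline{\sigma}}V(\bfx_{k_j})$ for every $i\in\{1,\dots,M\}$. Since $k_{j+1}\le k_j+M$, the index blocks $\{k_j,\dots,k_j+M\}$ cover all integers beyond $k_0$, and each value on such a block is at most a fixed constant times $V(\bfx_{k_j})\to 0$. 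Hence $V(\bfx_k)\to 0$ for the entire trajectory, and the class-$\mathcal{K}$ bound again yields $\bfx_k\to\mathbf{0}_n$. As $\bfx_0\in\calS$ was arbitrary, $\calS\subset\calR$ by Definition~\ref{def:roa}; local asymptotic stability of the origin (implicit in the ROA definition) follows from the local decrease near the origin via Theorem~\ref{thm:generalized_Lyapunov_GAS}.

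The main obstacle is precisely the loss of forward invariance: a priori a trajectory could leave $\calS$ and never return, so the decrease condition \eqref{eq:generalized_loss} could cease to apply. The resolution is the weighted-average-dominates-minimum inequality, which guarantees a genuinely decreasing step whose endpoint lies strictly inside $\calS$, so the good-step subsequence can always be continued and never exits $\calS$; the intermediate states are permitted to wander, but the weight floor $\underline{\sigma}$ caps how far, keeping their values proportional to a vanishing sequence. The one point requiring care in the write-up is justifying the class-$\mathcal{K}$/compactness step that converts $V\to0$ into $\bfx\to\mathbf{0}_n$, which relies on $V$ being continuous and positive definite on $\calS$ as guaranteed by \eqref{eq:roa_synthesis_posdef}.
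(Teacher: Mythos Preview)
Your proposal is correct and mirrors the paper's argument: the paper likewise extracts a strictly decreasing return-time subsequence $\{k_j\}$ from the weighted-average condition and uses the uniform floor $\underline{\sigma}$ to bound the intermediate excursions (the paper deploys the bound $V(\bfx_{k+i})\le \tfrac{M(1-\bar\alpha)}{\underline{\sigma}}V(\bfx_k)$ in its $\varepsilon$--$\delta$ stability step, whereas you use it to upgrade subsequence to full-sequence convergence, but it is the same estimate). The only cosmetic difference is that the paper proves Lyapunov stability explicitly rather than invoking Theorem~\ref{thm:generalized_Lyapunov_GAS}; for the bare inclusion $\calS\subset\calR$ your attractivity argument already suffices.
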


See Appendix~\ref{appendix:region_of_stability_proof} for the proof.

% \TODO{We may need additional assumptions for the above theorem, see the following remark.}
% \new{
% \begin{remark}[Intermediate–state overshoot]
% \label{rem:weight_bound}
% The $M$‑step inequality \eqref{eq:generalized_loss} constrains only a \emph{weighted
% average} of the next $M$ Lyapunov values.  If some weights
% $\sigma_i(\bfx)$ are extremely small (or zero), the condition does not rule out
% large excursions of the intermediate states, which can break the
% $\varepsilon$–$\delta$ requirement for Lyapunov stability.  
% One potential approach is to impose a
% uniform lower bound
% \[
%    \sigma_i(\bfx)\;\ge\;\underline{\sigma}>0
%    \quad\forall\,\bfx\in\mathcal S,\;i=1,\dots,M.
% \]
% Under this mild assumption every state in each $M$‑step block satisfies
% $V(\bfx_{k+i}) \le MV(\bfx_k)/\underline{\sigma}$.
% \end{remark}
% }

\textbf{Training Formulation.}
Following \citep{yang2024lyapunovstable}, we reformulate the problem \eqref{eq:roa_opt_synthesis} into a learning objective by sampling states \(\bfx_k \in \mathcal{X}\) and penalizing violations of the Lyapunov conditions using soft constraints. For each \(\bfx_k \in \mathcal{D}\), we simulate \(M\) forward steps under the closed-loop dynamics and compute the generalized Lyapunov residual \(F(\bfx_k)\) as defined in~\eqref{eq:generalized_loss}. 

To enforce stability and domain constraints, we define the stability loss:
\begin{equation}
\label{eq:stability_loss}
\mathcal{L}_{\text{stab}}(\bfx_k) := \text{ReLU}\left( \min\left\{
    \text{ReLU}(F(\bfx_k)) + c_0 \mathcal{H}(\bfx_k),
    \rho - V(\bfx_k, \bftheta_1)
\right\} \right),
\end{equation}
where \( \mathcal{H}(\bfx_k) := \sum_{i=1}^{M} \left\| \text{ReLU}(\bfx_{k+i} - \bfx_{\text{up}}) + \text{ReLU}(\bfx_{\text{lo}} - \bfx_{k+i}) \right\|_1 \) penalizes violations of the bounded domain \( \mathcal{X} = \{ \bfx \mid \bfx_{\text{lo}} \le \bfx \le \bfx_{\text{up}} \} \). The inner \(\min\) ensures that the generalized Lyapunov decrease condition is only enforced inside the certified region \(\mathcal{S}\).

To expand the certifiable region, \citet{yang2024lyapunovstable} proposed a surrogate region loss $\mathcal{L}_{\text{region}} := \sum_{j=1}^{N} \text{ReLU}\left( V(\bfx_j; \bftheta_1)/\rho - 1 \right)$, where the candidate states \(\bfx_j\) are obtained via random boundary sampling or projected gradient descent (PGD) to minimize \(V(\cdot; \bftheta_1)\). PGD is also used for falsification by maximizing stability violations, generating additional training states for \(\mathcal{D}\).

The final training objective is
% \begin{equation}
% \label{eq:final_training_loss}
$\mathcal{L}(\bftheta_1, \boldsymbol{\phi}) := \sum_{\bfx_k \in \mathcal{D}} \mathcal{L}_{\text{stab}}(\bfx_k) + c_1 \mathcal{L}_{\text{region}} + c_2 \|\bftheta_1, \boldsymbol{\phi}\|_1,$
% \end{equation}
where \(\mathcal{D}\) contains states sampled randomly and generated by falsification.

\begin{remark}
Although the theoretical framework (Theorem~\ref{thm:generalized_Lyapunov_GAS}) supports trainable weights \(\sigma_i(\bfx_k)\), including neural network parameterizations (see Section~\ref{sec:nonlinear_rl}), certifying stability under jointly learned controllers, Lyapunov functions, and weights remains computationally challenging with existing tools \citep{wang2021beta, gao2013dreal}. In practice, we fix \(\sigma_i\) during training. For small values of \(M\), we select the best-performing weight configuration via a simple grid search.
\end{remark}

\textbf{Verification Formulation.}  
After training, we verify that the generalized decrease condition holds within \(\mathcal{X}\) using the $\alpha$-$\beta$-CROWN verifier \citep{wang2021beta}. We formally certify Lyapunov stability of the origin if the following holds for all \(\bfx_k \in \mathcal{X}\):
\begin{equation}
\label{eq:verifier_condition}
\bigl( -F(\bfx_k) \ge 0 \;\wedge\; \bigwedge_{i=1}^{M} \bfx_{k+i} \in \mathcal{X} \bigr) \;\vee\; \left( V(\bfx_k) \ge \rho \right).
\end{equation}
%
% \marginJC{Ok, but then Remark 5.3 is quite misleading. We don't provide guarantees that the training will give you a valid certificate. We check after the fact whether it is -- something we could also do in Section 5. It is a bit misleading to say "formal verification results": at least, I understood it as saying we show that the outcome of the learnig is guaranteed to provide a valid certificate, which we don't give in this section either. }
%

% \begin{theorem}[\textbf{Formal Stability certification under generalized decrease}]
% \label{thm:stability_cert_generalized}
As Theorem \ref{thm:asymptotic_stability_sublevel} implies, if \eqref{eq:verifier_condition} holds for all \(\bfx_k \in \mathcal{X}\), then \(\mathcal{S} \) is an inner approximation of $\calR$. 
% \end{theorem}

\textbf{Evaluation.} 
We evaluate our generalized Lyapunov synthesis approach on three systems presented in \citep{yang2024lyapunovstable}: inverted pendulum, path tracking, and 2D quadrotor (with a 6D state space). Detailed system dynamics and neural network architectures used are provided in Appendix~\ref{appendix:dynamics}.

Figures~\ref{fig:pendulum_stability_overlay} and~\ref{fig:tracking_stability_overlay} show the certified stability regions \(\mathcal{S}\) for different horizon lengths \(M\). We use fixed step weights \((\sigma_1, \sigma_2, \dots)\) selected via grid search: for the inverted pendulum, \((0.4, 1.6)\) for \(M{=}2\) and \((0.3, 1.5, 1.2)\) for \(M{=}3\); for path tracking, \((0.4, 1.6)\) and \((1.2, 1.2, 0.6)\), respectively. As shown in the figures, multi-step training and verification yields consistently larger certifiable ROAs.

\begin{figure}[h]
    \centering
    \begin{minipage}{0.45\linewidth}
        \centering
        \includegraphics[width=\linewidth]{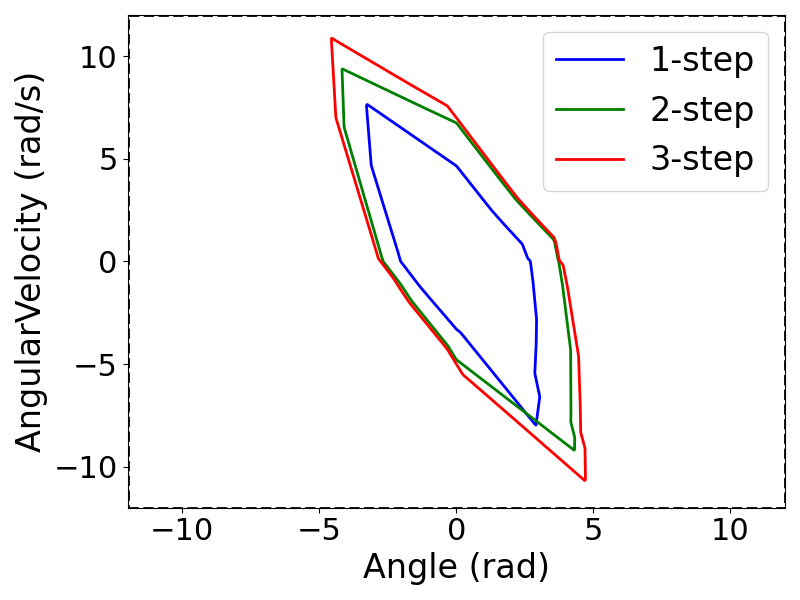}
        \caption{Certified ROAs for inverted pendulum.}
        \label{fig:pendulum_stability_overlay}
    \end{minipage}%
    \hfill%
    \begin{minipage}{0.45\linewidth}
        \centering
        \includegraphics[width=\linewidth]{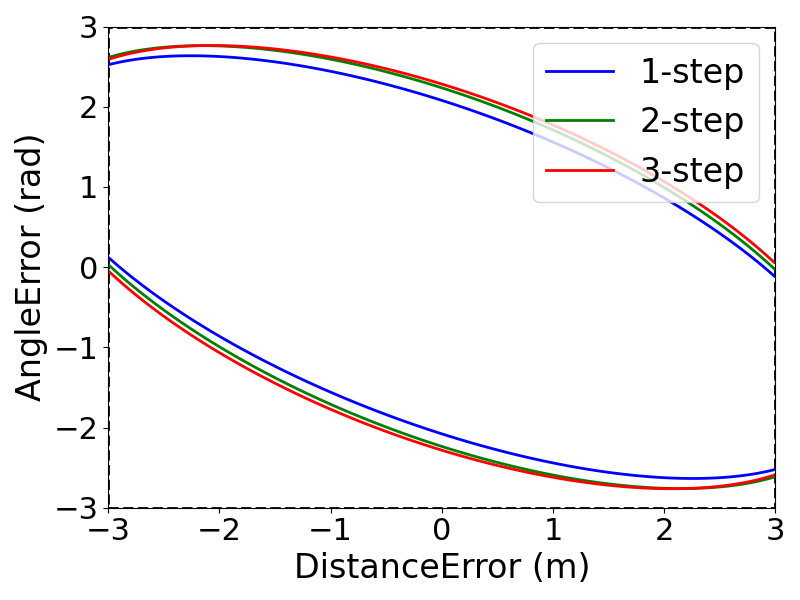}
        \caption{Certified ROAs for path-tracking.}
        \label{fig:tracking_stability_overlay}
    \end{minipage}
\vspace{-3ex}
\end{figure}

Table~\ref{tab:certified_results} reports the quantitative results, including the volume \((\mathcal{S})\) and the verification time. 
The volume is estimated via Monte Carlo integration: we sample \(n=10^6\) points in \(\mathcal{X}\) and compute the fraction satisfying \(V(\bfx) \leq \rho\), averaged across trials and scaled by the total domain volume. To assess robustness, we repeat each experiment 10 times with different random seeds and report the mean and standard deviation.
The results confirm that our generalized stability formulation consistently certifies larger ROA volumes compared to the classical one-step Lyapunov approach. 
This improvement comes at the cost of longer verification times, since bounding multiple intermediate states and their weighted combinations increases the complexity of bound propagation and branching decisions.

\begin{table}[h]
    \centering
    \caption{Certified region volume (left) and verification time (right) under different \(M\)-step Lyapunov training. Abbreviations: I.P. = Inverted Pendulum, P.T. = Path Tracking, 2.Q. = 2D Quadrotor.}
    \label{tab:certified_results}

    \begin{minipage}{0.48\linewidth}
        \centering
        \small
        \begin{tabular}{lccc}
            \toprule
            System & \(M = 1\) & \(M = 2\) & \(M = 3\) \\
            \midrule
            I.P.   & 42.9 $\pm$ 1.2 & 76.7 $\pm$ 1.3 & 89.2 $\pm$ 1.2 \\
            P.T.   & 21.8 $\pm$ 0.6 & 23.6 $\pm$ 0.5 & 23.9 $\pm$ 0.5 \\
            2.Q.  & 103.5 $\pm$ 1.8 & 109.1 $\pm$ 2.0 & 113.7 $\pm$ 2.0 \\
            \bottomrule
        \end{tabular}
        \subcaption{Certified ROA volume (\(\mathcal{S}\)).}
    \end{minipage}%
    \hfill%
    \begin{minipage}{0.48\linewidth}
        \centering
        \small
        \begin{tabular}{ccc}
            \toprule
            \(M = 1\) & \(M = 2\) & \(M = 3\) \\
            \midrule
            11.7 $\pm$ 0.5 & 21.5 $\pm$ 0.8 & 39.2 $\pm$ 1.3 \\
            8.6 $\pm$ 0.5  & 19.5 $\pm$ 1.0 & 36.7 $\pm$ 1.3 \\
            2209.5 $\pm$ 72.6 & 3858.7 $\pm$ 112.4 & 5628.6 $\pm$ 185.9 \\
            \bottomrule
        \end{tabular}
        \subcaption{Verification time (seconds).}
    \end{minipage}

    \vspace*{-2ex}
\end{table}

\section{Conclusion, Limitations, and Future Work}
\label{sec:conclusion}

We presented a new approach for constructing stability certificates for closed-loop systems under policies obtained from optimal control and RL. We also showed that our method can be used for joint synthesis of neural policies and stability certificates. Rather than learning certificates from scratch, a key insight is that the value function of a considered policy can be augmented with a residual term to obtain a certificate candidate. We also replaced the classical Lyapunov decrease condition with a generalized multi-step Lyapunov formulation, which provides asymptotical stability guarantees while improving learning flexibility. Our theoretical and empirical results show that this perspective enables construction of more effective stability certificates compared to traditional Lyapunov approaches.

Our approach has several limitations. First, the certification horizon \(M\) is fixed during training, and it remains unclear how to determine the minimal viable \(M\) for a given system. Second, the method has not yet been applied to high-dimensional systems such as humanoids or dexterous manipulators. Third, formally verifying neural policies (e.g., from RL), along with learned state-dependent weights and certificate functions, remains computationally challenging with existing tools.

This work opens up several promising directions for future research. One key avenue is to develop partial-state stability certification, providing guarantees for task-relevant components such as torso height or velocity in humanoid locomotion rather than for the full system state, which can improve both scalability and interpretability. A second direction is to extend the generalized Lyapunov formulation to certify robust stability under bounded perturbations and stochastic disturbances, taking inspiration from input-to-state and robust Lyapunov theory. Another important direction is to relax the requirement of knowing the equilibrium state, allowing stability analysis when the equilibrium is unknown or non-isolated, as often occurs in reinforcement learning, by inferring equilibria from data or from the reward structure. Finally, investigating the connection between generalized Lyapunov conditions and optimality objectives may yield new insights for reward design and policy refinement in reinforcement learning.

\section*{Acknowledgments}

We gratefully acknowledge support from ONR Award N00014-23-1-2353 and NSF Award CCF-2112665 (TILOS).

%% Use plainnat to work nicely with natbib. 

\bibliographystyle{plainnat}
\bibliography{references.bib}

\appendix
\section{Proof of Theorem \ref{thm:generalized_Lyapunov_GAS}}
\label{appendix:stability_proof}

\begin{proof}
\emph{Attractivity.} We first show attractivity. 
Fix any initial condition $\bfx_0\in\calX\setminus\{\mathbf 0\}$ and let
$(\bfx_k)_{k\ge0}$ be the corresponding trajectory.
For each $k\ge0$ choose an index
\begin{equation}
\ell(k)\;\in\;\bigl\{1,\dots,M\bigr\}
\quad\text{such that}\quad
V\bigl(\bfx_{k+\ell(k)}\bigr)
< V(\bfx_k), \label{eq:A1}
\end{equation}
whose existence is guaranteed by~\eqref{eq:relaxed_dec}.  
Define the strictly increasing sequence of \emph{return indices}
\begin{equation}
k_0:=0,
\qquad
k_{j+1}:=k_j+\ell \bigl(k_j\bigr),
\quad j=0,1,\dots,
\label{eq:kj_def}
\end{equation}
for which, by construction,
\begin{equation}
0< k_{j+1}-k_j\le M,
\qquad
V\!\bigl(\bfx_{k_{j+1}}\bigr)
  < V\!\bigl(\bfx_{k_j}\bigr),
\quad
j=0,1,\dots .
\label{eq:strict_dec_subseq}
\end{equation}

Due to the positive-definiteness of $V$, applying standard Lyapunov arguments on $\{V(x_{k_j})\}$, we conclude that $\lim_{k_j\to\infty}V(\bfx_{k_j})=0.$

Because the gaps satisfy $k_{j+1}-k_j\le M$ and $V$ is continuous, we have
\begin{equation}
\lim_{k\to\infty}V(\bfx_k)=0.
\label{eq:fullseq_to_zero}
\end{equation}
Finally, the positive-definiteness of $V$ yields
$\displaystyle \bfx_k\to\mathbf 0_n$ as $k\to\infty$.
Thus the origin is \emph{attractive}.

\emph{Stability.}  
Since both the system dynamics \( \bff \) and the policy \( \bfpi \) are assumed to be Lipschitz, the closed-loop map \( \bfx \mapsto \bff(\bfx, \bfpi(\bfx)) \) is also Lipschitz on the forward-invariant set \( \calX \). Let \( L > 0 \) denote the Lipschitz constant. Then, for all \( 0 \le i < M \) and \( k \ge 0 \),
\begin{equation}
\|\bfx_{k+i}\|\;\le\;L^{\,i}\,\|\bfx_k\|.
\label{eq:L_growth}
\end{equation}

Fix an arbitrary $\varepsilon>0$ and choose
\begin{equation}
\delta \;:=\;\frac{\varepsilon}{L^{\,M-1}} .
\label{eq:delta_def}
\end{equation}
Consider any initial state with $\|\bfx_0\|<\delta$.
By construction of the return indices \(k_j\) in~\eqref{eq:kj_def} we have
\(\|\bfx_{k_{j+1}}\|\le\|\bfx_{k_j}\|\) for all \(j\), so in particular
\begin{equation}
\|\bfx_{k_j}\|\;<\;\delta,
\qquad
j=0,1,\dots .
\label{eq:base_small}
\end{equation}

For an arbitrary time index \(k\) pick \(j\) such that
\(k_j\le k<k_j+M\).
Applying \eqref{eq:L_growth} with \(i=k-k_j<M\) and
\eqref{eq:base_small} yields
\[
\|\bfx_k\|\;\le\;L^{\,M-1}\,\|\bfx_{k_j}\|
           \;<\;L^{\,M-1}\,\delta
           \;=\;\varepsilon .
\]

Because $\varepsilon$ was arbitrary, the $\varepsilon$–$\delta$ condition
holds and the origin is Lyapunov stable.

In conclusion, the origin is both attractive and Lyapunov–stable on~$\calX$; therefore it
is asymptotically stable relative to~$\calX$.
\end{proof}

\begin{remark}[\textbf{Exponential Stability}]
\label{remark:exponential_stability}
Suppose the function \(V: \mathcal{X} \to \mathbb{R}_{\geq 0}\) satisfies the positive definiteness condition in \eqref{eq:pos_def}, 
and there exist constants \(\beta_1, \beta_2, p > 0\) such that
\begin{equation}
\label{eq:poly_bounds}
\beta_1 \|\bfx\|^p \leq V(\bfx) \leq \beta_2 \|\bfx\|^p, \quad \forall \bfx \in \mathcal{X}.
\end{equation}
Suppose further that \(V\) satisfies the strict generalized decrease condition for all \(\bfx_k \in \mathcal{X}\),
\begin{equation}
\label{eq:strict_generalized_decrease}
\frac{1}{M} \sum_{i=1}^M \sigma_i(\bfx_k) V(\bfx_{k+i}) - V(\bfx_k) \leq -\alpha(V(\bfx_k)),
\end{equation}
where \(\alpha: \mathbb{R}_{\geq 0} \to \mathbb{R}_{\geq 0}\) is a class-\(\mathcal{K}\) function. Then the origin \(\bfx = \mathbf{0}_n\) is exponentially stable. 
\end{remark}

\section{Construction of a Classical Lyapunov Function from a Generalized One}
\label{appendix:construct_lf_from_glf}

This appendix shows that when the generalized $M$-step Lyapunov decrease in
Definition~\ref{def:generalized_lyapunov} holds with \emph{state-independent} step weights,
one can explicitly construct a classical one-step Lyapunov function, following a
similar idea to classical constructions that convert multi-step (non-monotone) decrease
conditions into a one-step Lyapunov decrease~\cite{amir_2008_non_montonic_LF}. In contrast,
when the step weights depend on the state, obtaining an explicit closed-form classical
Lyapunov function from the generalized certificate appears non-trivial; while asymptotic
stability still follows from Theorem~\ref{thm:generalized_Lyapunov_GAS}, deriving a simple
one-step Lyapunov function construction in the state-dependent case is left for future work.

\begin{Assumption}[\textbf{Constant step weights}]
\label{assump:constant_weights}
The step weights in Definition~\ref{def:generalized_lyapunov} are constant, i.e., there exist
$\sigma_1,\dots,\sigma_M \ge 0$ (independent of $\bfx$) such that for all
$\bfx\in\mathcal{X}\setminus\{\mathbf{0}\}$,
\begin{equation}
\label{eq:glf_const_sigma}
\frac{1}{M}\Big(\sum_{i=1}^M \sigma_i V(\bfx_i)\Big) - V(\bfx) < 0, \; \; \frac{1}{M}\sum_{i=1}^M \sigma_i \ge 1,
\end{equation}
where $\bfx_0=\bfx$ and $\bfx_{i+1}=\bff(\bfx_i,\bfpi(\bfx_i))$ for $i=0,\dots,M-1$.
\end{Assumption}

\begin{lemma}[\textbf{Explicit construction of a classical Lyapunov function}]
\label{lem:construct_W_constant}
Suppose $V:\mathcal{X}\to\mathbb{R}_{\ge 0}$ satisfies Definition~\ref{def:generalized_lyapunov}
and Assumption~\ref{assump:constant_weights}. Without loss of generality, assume
\begin{equation}
\label{eq:wlog_sigma_sum}
\sum_{i=1}^M \sigma_i = M .
\end{equation}
Define coefficients
\begin{equation}
\label{eq:aj_def}
a_j := \frac{1}{M}\sum_{i=j+1}^{M}\sigma_i,
\qquad j=0,\dots,M-1,
\end{equation}
and the function
\begin{equation}
\label{eq:W_construct}
W(\bfx)
~:=~
\sum_{j=0}^{M-1} a_j\,V\!\left(\bff^{\,j}(\bfx)\right),
\end{equation}
where $\bff^{\,j}$ denotes $j$-fold composition of the closed-loop map
$\bfx\mapsto \bff(\bfx,\bfpi(\bfx))$. Then $W$ is a classical Lyapunov function:
\begin{equation}
\label{eq:W_pos_def}
W(\mathbf{0}) = 0,\qquad W(\bfx) > 0 \ \ \forall \bfx\neq \mathbf{0},
\end{equation}
and it decreases strictly in one step:
\begin{equation}
\label{eq:W_strict_dec}
W\!\left(\bff(\bfx,\bfpi(\bfx))\right) - W(\bfx) < 0 \ \ \forall \bfx\neq \mathbf{0}.
\end{equation}
\end{lemma}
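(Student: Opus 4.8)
The plan is to verify, for the candidate $W$ defined in \eqref{eq:W_construct}, the three defining properties of a classical Lyapunov function: continuity, positive definiteness \eqref{eq:W_pos_def}, and the strict one-step decrease \eqref{eq:W_strict_dec}. Throughout I write $\bfx_i = \bff^{\,i}(\bfx)$ for the iterates of the closed-loop map, so that the generalized hypothesis \eqref{eq:glf_const_sigma} reads $\frac{1}{M}\sum_{i=1}^{M}\sigma_i\,V(\bff^{\,i}(\bfx)) - V(\bfx) < 0$ for all $\bfx\neq\mathbf{0}$. Continuity of $W$ is immediate: it is a finite combination of the continuous maps $V\circ\bff^{\,j}$ (recall $V$ is continuous and the closed-loop map is locally Lipschitz, hence continuous).

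For positive definiteness I would first record the consequence of the normalization \eqref{eq:wlog_sigma_sum}, namely $a_0 = \frac{1}{M}\sum_{i=1}^{M}\sigma_i = 1$, together with $a_j\ge 0$ for every $j$ and $V\ge 0$. Since $\bff^{\,j}(\mathbf{0})=\mathbf{0}$ and $V(\mathbf{0})=0$, this gives $W(\mathbf{0})=0$; and for $\bfx\neq\mathbf{0}$, dropping all nonnegative terms except $j=0$ yields $W(\bfx)\ge a_0 V(\bfx)=V(\bfx)>0$ by the strict positivity of $V$ off the origin. This establishes \eqref{eq:W_pos_def}.

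The crux is the decrease \eqref{eq:W_strict_dec}, which I would obtain by a summation-by-parts (Abel) argument. Expanding $W(\bff(\bfx,\bfpi(\bfx))) = \sum_{j=0}^{M-1} a_j V(\bff^{\,j+1}(\bfx))$, shifting the index, and subtracting $W(\bfx)$, I regroup the result by the iterate $\bff^{\,j}(\bfx)$: the initial iterate $j=0$ carries coefficient $-a_0$, the terminal iterate $j=M$ carries $a_{M-1}$, and each intermediate iterate carries $a_{j-1}-a_j$. The key algebraic fact, immediate from the tail-sum definition \eqref{eq:aj_def}, is that these coefficients telescope: $a_{j-1}-a_j = \sigma_j/M$ for $1\le j\le M-1$, while $a_{M-1}=\sigma_M/M$ and $a_0=1$. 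Substituting collapses the one-step difference to
\[
W\!\left(\bff(\bfx,\bfpi(\bfx))\right)-W(\bfx)
= \frac{1}{M}\sum_{j=1}^{M}\sigma_j\,V\!\left(\bff^{\,j}(\bfx)\right) - V(\bfx),
\]
which is exactly the left-hand side of the generalized condition \eqref{eq:glf_const_sigma} and hence strictly negative for $\bfx\neq\mathbf{0}$.

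I expect the only real obstacle to be bookkeeping: correctly reindexing the shifted sum and verifying the telescoping identity so that the initial, intermediate, and terminal coefficients reassemble into the weighted average $\frac{1}{M}\sum_{j}\sigma_j V(\bff^{\,j}(\bfx))$. The definition of $a_j$ as the tail sum of the weights is precisely engineered so that the one-step difference of $W$ reproduces the $M$-step weighted decrease of $V$; once this identity is in place, negativity is inherited directly from the hypothesis with no further estimates required.
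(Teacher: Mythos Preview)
Your proposal is correct and follows essentially the same approach as the paper: both establish $W(\bfx)\ge V(\bfx)$ from $a_0=1$ and $a_j\ge 0$ for positive definiteness, and both compute the one-step difference of $W$ by reindexing and invoking the telescoping identity $a_{j-1}-a_j=\sigma_j/M$, $a_{M-1}=\sigma_M/M$, $a_0=1$ to reduce it exactly to the generalized decrease condition. Your explicit mention of continuity is a minor addition the paper leaves implicit.
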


\begin{proof}
By \eqref{eq:wlog_sigma_sum} and \eqref{eq:aj_def}, we have $a_0=1$ and $a_j\ge 0$ for all $j$.
Hence
\begin{equation}
\label{eq:W_lower_bound_V}
W(\bfx)\ge V(\bfx),
\end{equation}
which implies \eqref{eq:W_pos_def}. If $V$ is radially unbounded, then so is $W$ by
\eqref{eq:W_lower_bound_V}.

Let $\{\bfx_k\}_{k\ge 0}$ be any closed-loop trajectory and define $V_k:=V(\bfx_k)$.
Using \eqref{eq:W_construct},
\begin{equation}
\label{eq:W_diff_expand}
\begin{aligned}
W(\bfx_{k+1}) - W(\bfx_k)
&=
\sum_{j=0}^{M-1} a_j V_{k+1+j}
-
\sum_{j=0}^{M-1} a_j V_{k+j} \\
&=
-a_0 V_k
+\sum_{j=1}^{M-1}(a_{j-1}-a_j)V_{k+j}
+a_{M-1}V_{k+M}.
\end{aligned}
\end{equation}
By \eqref{eq:aj_def}, $a_{j-1}-a_j=\sigma_j/M$ for $j=1,\dots,M-1$ and $a_{M-1}=\sigma_M/M$,
and by \eqref{eq:wlog_sigma_sum} we have $a_0=1$. Substituting into \eqref{eq:W_diff_expand} gives
\begin{equation}
\label{eq:W_diff_equals_glf}
W(\bfx_{k+1}) - W(\bfx_k)
=
\frac{1}{M}\sum_{i=1}^M \sigma_i V_{k+i}
-
V_k.
\end{equation}
The right-hand side of \eqref{eq:W_diff_equals_glf} is strictly negative for all $\bfx_k\neq \mathbf{0}$
by \eqref{eq:glf_const_sigma}, proving \eqref{eq:W_strict_dec}.
\end{proof}

\paragraph{Scaling the weights.}
If $\sum_{i=1}^M\sigma_i > M$, define $\tilde\sigma_i := \frac{M}{\sum_{j=1}^M\sigma_j}\sigma_i$.
Then $\tilde\sigma_i\ge 0$ and $\sum_{i=1}^M\tilde\sigma_i=M$. Since
$\frac{M}{\sum_{j=1}^M\sigma_j}\in(0,1)$, the left-hand side of \eqref{eq:glf_const_sigma} is scaled
down while $V(\bfx)$ is unchanged, so the strict inequality in \eqref{eq:glf_const_sigma} remains
valid. Hence \eqref{eq:wlog_sigma_sum} is without loss of generality.

\section{Proof of Theorem~\ref{thm:generalized_stability}}
\label{appendix:generalized_stability_proof}

For clarity, we present the detailed proof for the case \(M=2\). The general \(M\)-step case follows analogously by extending the argument to multiple future steps.

\begin{proof}
We consider the generalized Lyapunov decrease condition over two steps:
\begin{equation}
\label{eq:generalized_decrease_condition_appendix}
\frac{1}{2}\left( \sigma_2 V(\bfx_{k+2}) + \sigma_1 V(\bfx_{k+1}) \right) - V(\bfx_k) < 0,
\end{equation}
where \(\sigma_1, \sigma_2 \geq 0\) and \(\sigma_1 + \sigma_2 \geq 2\).

Using the discounted Bellman equation for the optimal value function \(V_\gamma\),
\begin{equation}
\label{eq:bellman_discounted}
V_\gamma(\bfx_{k+1}) - V_\gamma(\bfx_k) = -\gamma^{-1} \ell(\bfx_k, \bfu_k^*) + (1-\gamma)\gamma^{-1} V_\gamma(\bfx_k),
\end{equation}
we can recursively expand the two-step generalized decrease condition for \(V_\gamma\). Direct computation yields:
\begin{align}
&\frac{1}{2} \left( \sigma_2 V_\gamma(\bfx_{k+2}) + \sigma_1 V_\gamma(\bfx_{k+1}) \right) - V_\gamma(\bfx_k) \nonumber \\
&= \frac{\sigma_2}{2}\gamma^{-1} V_\gamma(\bfx_{k+1}) + \left( \frac{\sigma_1}{2}\gamma^{-1} - 1 \right) V_\gamma(\bfx_k) - \frac{\sigma_1}{2}\gamma^{-1} \ell(\bfx_k, \bfu_k^*) - \frac{\sigma_2}{2}\gamma^{-1} \ell(\bfx_{k+1}, \bfu_{k+1}^*) \nonumber \\
&\leq \begin{bmatrix} \bfx_{k+1} & \bfx_{k} \end{bmatrix} \begin{bmatrix} \frac{\sigma_2}{2}\gamma^{-1} \bfP & \mathbf{0} \\ \mathbf{0} & (\frac{\sigma_1}{2}\gamma^{-1} - 1)\mathbf{P} \end{bmatrix}\begin{bmatrix} \bfx_{k+1} \\ \bfx_{k} \end{bmatrix} -\frac{\sigma_2}{2}\ell(\bfx_{k+1}, \bfu_{k+1}^*) - \frac{\sigma_1}{2} \ell(\bfx_k, \bfu_k^*)
\end{align}

However, due to the presence of the discount factor $\gamma \in (0,1)$, the value function $V_\gamma$ does not naturally satisfy the required decrease condition.

Following the approach in \citep{Romain_2017_TAC}, we introduce an auxiliary quadratic function:
\begin{equation}
W(\bfx) = \frac{1}{\varpi} \bfx^\top \bfS_0 \bfx,
\end{equation}
where \(\bfS_0 \succ 0\) and \(\varpi > 0\) are design parameters.

Define the composite function:
\begin{equation}
Y_\gamma(\bfx) := V_\gamma(\bfx) + W(\bfx).
\end{equation}

Our goal is to ensure that \(Y_\gamma\) satisfies the generalized Lyapunov decrease condition in \eqref{eq:generalized_decrease_condition_appendix}.

We compute the change in the auxiliary function \(W(\bfx)\) over two steps:
\begin{align}
\label{eq: appendix_auxiliary_change_1}
&\frac{1}{2}\left( \sigma_2 W(\bfx_{k+2}) + \sigma_1 W(\bfx_{k+1}) \right) - W(\bfx_k) \nonumber \\
&= \frac{1}{\varpi}\left( \frac{\sigma_2}{2} \bfx_{k+2}^\top \bfS_0 \bfx_{k+2} + \frac{\sigma_1}{2} \bfx_{k+1}^\top \bfS_0 \bfx_{k+1} - \bfx_k^\top \bfS_0 \bfx_k \right).
\end{align}

Expanding the system dynamics
\begin{equation}
\bfx_{k+1} = \bfA \bfx_k + \bfB \bfu_k, \quad \bfx_{k+2} = \bfA \bfx_{k+1} + \bfB \bfu_{k+1},
\end{equation}
and substituting into \eqref{eq: appendix_auxiliary_change_1}, we express the change in \(W\) as 
\begin{align}
&\frac{1}{2}\left( \sigma_2 W(\bfx_{k+2}) + \sigma_1 W(\bfx_{k+1}) \right) - W(\bfx_k) \nonumber \\
&= \frac{\sigma_1}{2\varpi} \left( \bfx_k^\top (\bfA^\top \bfS_0 \bfA - \bfS_0) \bfx_k + \bfu_k^\top \bfB^\top \bfS_0 \bfB \bfu_k + 2 \bfx_k^\top \bfA^\top \bfS_0 \bfB \bfu_k \right) \nonumber\\
&\quad + \frac{\sigma_2}{2\varpi} \left( \bfx_{k+1}^\top \bfA^\top \bfS_0 \bfA \bfx_{k+1} + \bfu_{k+1}^\top \bfB^\top \bfS_0 \bfB \bfu_{k+1} + 2 \bfx_{k+1}^\top \bfA^\top \bfS_0 \bfB \bfu_{k+1} \right).
\label{eq:W_change_appendix_2}
\end{align}

To upper bound this change, we introduce positive definite matrices \(\bfS_1, \bfS_2 \succ 0\) and impose:
\begin{align}
&\frac{1}{2}\left( \sigma_2 W(\bfx_{k+2}) + \sigma_1 W(\bfx_{k+1}) \right) - W(\bfx_k) \nonumber\\
&\leq -\frac{1}{\varpi}\bfx_{k+1}^\top \bfS_2 \bfx_{k+1} - \frac{1}{\varpi}\bfx_k^\top \bfS_1 \bfx_k + \frac{\sigma_1}{2}\ell(\bfx_k, \bfu_k^*) + \frac{\sigma_2}{2}\ell(\bfx_{k+1}, \bfu_{k+1}^*).
\label{eq:W_bound_appendix}
\end{align}

Combining the changes in \(V_\gamma\) and \(W\), we obtain:
\begin{align}
&\frac{1}{2}\left( \sigma_2 Y_\gamma(\bfx_{k+2}) + \sigma_1 Y_\gamma(\bfx_{k+1}) \right) - Y_\gamma(\bfx_k) \nonumber\\
&\leq -\bfx_{k+1}^\top\left( \frac{1}{\varpi}\bfS_2 - \frac{\sigma_2}{2}\gamma^{-1}\bfP \right)\bfx_{k+1} - \bfx_k^\top\left( \frac{1}{\varpi}\bfS_1 - \left( \frac{\sigma_1}{2}\gamma^{-1} - 1 \right)\bfP \right)\bfx_k.
\end{align}

Thus, to ensure strict decay of \(Y_\gamma\), it suffices to require:
\begin{align}
\frac{1}{\varpi}\bfS_2 - \frac{\sigma_2}{2}\gamma^{-1}\bfP &\succ 0, \label{eq:S2_decay_condition}\\
\frac{1}{\varpi}\bfS_1 - \left( \frac{\sigma_1}{2}\gamma^{-1} - 1 \right)\bfP &\succ 0. \label{eq:S1_decay_condition}
\end{align}

This leads to the requirement that \(\gamma\) must satisfy:
\begin{equation}
\label{eq:gamma_condition_appendix}
\gamma > \max\left( \frac{\sigma_2 \varpi}{2\alpha_2}, \quad \frac{\sigma_1 \varpi}{2(\varpi+\alpha_1)} \right),
\end{equation}
where \(\alpha_1, \alpha_2 > 0\) satisfy the comparison inequalities:
\begin{equation}
\alpha_1 \bfP \preceq \bfS_1, \quad \alpha_2 \bfP \preceq \bfS_2.
\end{equation}

Furthermore, by comparing \eqref{eq:W_change_appendix_2} and \eqref{eq:W_bound_appendix}, the auxiliary matrices must satisfy the following LMIs:
\begin{align}
\begin{bmatrix}
\frac{\sigma_1}{2}\bfA^\top \bfS_0 \bfA - \bfS_0 + \bfS_1 - \varpi\bfQ & \frac{\sigma_1}{2}\bfA^\top \bfS_0 \bfB \\
\frac{\sigma_1}{2}\bfB^\top \bfS_0 \bfA & \frac{\sigma_1}{2}\bfB^\top \bfS_0 \bfB - \varpi\bfR
\end{bmatrix}
&\preceq 0,\\
\begin{bmatrix}
\frac{\sigma_2}{2}\bfA^\top \bfS_0 \bfA - \bfS_2 - \varpi\bfQ & \frac{\sigma_2}{2}\bfA^\top \bfS_0 \bfB \\
\frac{\sigma_2}{2}\bfB^\top \bfS_0 \bfA & \frac{\sigma_2}{2}\bfB^\top \bfS_0 \bfB - \varpi\bfR
\end{bmatrix}
&\preceq 0.
\end{align}

This concludes the proof.
\end{proof}

\section{RL Certificates Training Details}
\label{sec: appendix_rl_certificate_training}

\subsection{Network Architectures}
We use feedforward neural networks with LeakyReLU activations for both the residual network $\varphi(\cdot, \bftheta_1)$ and step-weighting network $\sigma(\cdot, \bftheta_2)$. The architectures are summarized in Table~\ref{tab:nn_architectures_rl}. The residual network is initialized using Kaiming initialization \citep{he2015delving}, and all biases are initialized to zero. 

\begin{table}[h]
\centering
\caption{Neural network architectures for the residual and step-weighting networks.}
\label{tab:nn_architectures_rl}
\begin{tabular}{lcc}
\toprule
Component & Architecture & Output \\
\midrule
Residual Network  & 3 layers, width 64 & Scalar value \\
Step-weighting Network  & 3 layers, width 64 & $M$-dimensional weights \\
\bottomrule
\end{tabular}
\end{table}

\subsection{Training Configuration}
We train all networks using the Adam optimizer with an initial learning rate of \(5 \times 10^{-4}\), a ReduceLROnPlateau scheduler (factor 0.5, patience 500), and a batch size of 256 for 1000 epochs. Gradients are clipped at 5.0. We set the decay parameter to \(\bar{\alpha} = 0.02\) and the slack to \(\beta = 0.01\). As discussed in Remark~\ref{rem:remove_origin_ball}, we exclude a small ball around the origin during both training and evaluation. Specifically, we set \( \delta = 0.05 \) for the inverted pendulum and \( \delta = 0.5 \) for the cartpole.

\subsection{Compute Resources}
All experiments were run on a single workstation with an NVIDIA RTX 4090 GPU, AMD Ryzen 9 7950X CPU, and 64 GB RAM. Training the Lyapunov certificate for the inverted pendulum takes approximately 40 minutes, and for the cartpole environment around 4 hours. These durations include all rollouts and training steps for both the residual and step-weight networks.

\section{Proof of Theorem~\ref{thm:asymptotic_stability_sublevel}}
\label{appendix:region_of_stability_proof}

For clarity, we prove the result for \( M = 2 \) and assume $\sigma_1, \sigma_2$ are constants. 

\begin{proof}
We restate the assumptions. Let the closed-loop system be:
\[
\bfx_{k+1} = \bff(\bfx_k, \bfpi_\phi(\bfx_k)),
\]
and \( V: \mathbb{R}^n \to \mathbb{R}_{\ge 0} \) is a continuous function satisfying
\begin{align}
   V(\mathbf{0}_n) = 0, \quad
   V(\bfx_k) > 0 &\quad \forall \bfx_k \in \mathcal{S} \setminus \{\mathbf{0}_n \}, \\
  \frac{1}{2}(\sigma_1 V(\bfx_{k+1}) + \sigma_2V(\bfx_{k+2})) \le (1 - \bar{\alpha}) V(\bfx_k), &\quad \forall \bfx_k \in \mathcal{S}, \label{eq:2_step_dec_roa}
\end{align}
with $\sigma_1, \sigma_2 > \underline{\sigma} > 0$ and $\sigma_1 + \sigma_2 \geq 2$.  

\emph{Attractivity.} The attractivity proof follows as in Appendix~\ref{appendix:stability_proof},  we conclude \( \lim_{k \to \infty} \bfx_k = \mathbf{0}_n \).

\emph{Stability.}
We follow the classical $\varepsilon$–$\delta$ Lyapunov proof in
\cite[Thm.~3.3]{khalil1996nonlinear}.
Fix an arbitrary radius $\varepsilon>0$ and define
\begin{equation}\label{eq:def_meps}
  m_\varepsilon \;:=\;
  \inf_{\|x\|=\varepsilon} V(x).
\end{equation}
Because the $\varepsilon$‑sphere
$\mathcal{E} = \{x\in\mathbb{R}^n : \|x\|=\varepsilon\}$ is compact and $V$ is continuous, the infimum is achieved on $\mathcal{E}$. 
%hence we may write $m_\varepsilon=\min_{\|x\|=\varepsilon} V(x)$.  
Positive‑definiteness of $V$ further implies $m_\varepsilon>0$.

Define
\[
  c \;:=\; \frac{2(1-\bar\alpha)}{\underline{\sigma}}\;>\;0.
\]
Because $V$ is continuous at the origin and $V(0)=0$, for every
threshold $\eta>0$ there exists a radius $\delta(\eta)>0$ such that
\[
  \|x\|<\delta(\eta)
  \;\;\Longrightarrow\;\;
  V(x)<\eta.
\]
Choosing $\eta = m_\varepsilon/c$ and setting $\delta =\delta(m_\varepsilon/c)$, gives
\begin{equation}\label{eq:def_delta}
  \|x\|<\delta
  \;\;\Longrightarrow\;\;
  V(x)<\frac{m_\varepsilon}{c}.
\end{equation}

Now, pick any index $k$ with $x_k\in\mathcal S$ and $\|x_k\|<\delta$.
From~\eqref{eq:def_delta}, we have the bound
\begin{equation}
V(x_k)\;<\;\frac{m_\varepsilon}{c}.
\label{eq:Vk_bound}
\end{equation}
Using the two–step Lyapunov condition~\eqref{eq:2_step_dec_roa}
and the fact that $\sigma_1,\sigma_2\ge\underline\sigma$, we obtain
\begin{equation}
\underline\sigma\!\left(V(x_{k+1})+V(x_{k+2})\right)
\;\le\;
2(1-\bar\alpha)\,V(x_k)
\;\;\Longrightarrow\;\;
V(x_{k+i})\;\le\;c\,V(x_k),
\quad i=1,2.
\label{eq:two_step_new}
\end{equation}
Combining \eqref{eq:two_step_new} with \eqref{eq:Vk_bound} gives
\[
V(x_{k+1})<m_\varepsilon,
\qquad
V(x_{k+2})<m_\varepsilon.
\]
By definition of $m_\varepsilon$,
$V(x)<m_\varepsilon$ implies $\|x\|<\varepsilon$; hence
\begin{equation}
\|x_{k+1}\|<\varepsilon,
\qquad
\|x_{k+2}\|<\varepsilon.
\label{eq:xkp1_xkp2_eps}
\end{equation}

Equation~\eqref{eq:xkp1_xkp2_eps} shows that, starting from any state
inside the $\delta$‑ball, the trajectory remains inside the prescribed
$\varepsilon$‑ball. 

Moreover, due to the strict decrease enforced by~\eqref{eq:2_step_dec_roa}, it is not possible for both \( V(\bfx_{k+1}) \ge V(\bfx_k) \) and \( V(\bfx_{k+2}) \ge V(\bfx_k) \) to hold simultaneously. Therefore, at least one of the two future states must satisfy \( V(\bfx_{k+i}) < V(\bfx_k) \) for \( i \in \{1,2\} \), hence the trajectory enters a strictly
smaller sublevel set of $V$ within two steps, allowing the generalized Lyapunov condition to be re-applied.

We now distinguish two cases to formalize the re-application logic:

\textbf{Case 1:} \( \bfx_{k+1} \in \mathcal{S} \). Then the generalized condition can be re-applied at \( \bfx_{k+1} \), and since \( V(\bfx_{k+1}) < \epsilon \), the next steps \( \bfx_{k+2}, \bfx_{k+3} \) will also lie within the \( \epsilon \)-ball by repeating the same argument.

\textbf{Case 2:} \( \bfx_{k+1} \notin \mathcal{S} \). Then \( \bfx_{k+2} \in \mathcal{S} \), and again \( V(\bfx_{k+2}) < \epsilon \Rightarrow \|\bfx_{k+2}\| < \epsilon \), allowing the condition to be re-applied at step \( k+2 \).

\textbf{Conclusion.} In both cases, the trajectory remains within the \( \epsilon \)-ball for all time, and the generalized decrease condition continues to hold along the sequence. Therefore, for all \( \epsilon > 0 \), there exists \( \delta > 0 \) such that:
\begin{equation}
\|\bfx_0\| < \delta \quad \Rightarrow \quad \|\bfx_k\| < \epsilon \quad \forall k \ge 0,
\label{eq:stability_condition}
\end{equation}
proving Lyapunov stability of the origin relative to \( \mathcal{S} \).

Therefore, we conclude that $\calS$ is a valid inner approximation of the ROA. 
\end{proof}

\section{System Dynamics for Formal Verification}
\label{appendix:dynamics}

We describe below the continuous-time dynamics, control constraints, and verification domain \(\mathcal{X}\) for each benchmark system used in our evaluations. All systems are discretized using explicit Euler integration with a time step of \(\Delta t = 0.05\,\text{s}\).

\subsection{Inverted Pendulum}

The inverted pendulum is modeled as a single-link rigid body rotating about its base, subject to bounded torque control. The continuous-time dynamics are:
\begin{align}
\dot{\theta} &= \omega, \\
\dot{\omega} &= \frac{g}{\ell} \sin(\theta) - \frac{\beta}{m \ell^2} \omega + \frac{u}{m \ell^2},
\end{align}
where \(\theta\) is the pendulum angle (upright at \(\theta = 0\)), \(\omega\) is the angular velocity, and \(u\) is the control input. We use the following parameters: gravity \(g = 9.81\,\text{m/s}^2\), mass \(m = 0.15\,\text{kg}\), length \(\ell = 0.5\,\text{m}\), and damping coefficient \(\beta = 0.1\).

The control input is saturated to \(u \in [-6, 6]\) Nm. The state vector is \(\bfx = [\theta, \omega]^\top\), and the verification region is defined as:
\[
\mathcal{X}_{\text{pendulum}} := \left\{ \bfx \in \mathbb{R}^2 \;\middle|\; \theta \in [-12, 12],\; \omega \in [-12, 12] \right\}.
\]

\subsection{Path Tracking}

We consider a kinematic model for a ground vehicle tracking a circular path at constant forward speed \(v\). The system state is \(\bfx = [d_e, \theta_e]^\top\), where \(d_e\) is the lateral distance to the reference path and \(\theta_e\) is the heading error. The control input \(u\) is the steering angle. The continuous-time dynamics are:
\begin{align}
\dot{d}_e &= v \sin(\theta_e), \\
\dot{\theta}_e &= \frac{v}{L} u - \frac{\cos(\theta_e)}{R - d_e},
\end{align}
where \(L = 1\,\text{m}\) is the wheelbase length, \(R = 10\,\text{m}\) is the radius of the reference path, and \(v = 2\,\text{m/s}\) is the constant forward speed. The control input is bounded by \(|u| \le 0.84\).

The verification domain is defined as:
\[
\mathcal{X}_{\text{path}} := \left\{ \bfx \in \mathbb{R}^2 \;\middle|\; d_e \in [-3, 3],\; \theta_e \in [-3, 3] \right\}.
\]

\subsection{2D Quadrotor}

The 2D quadrotor models a rigid body with six states: horizontal and vertical positions \(x, z\), pitch angle \(\theta\), and their corresponding velocities \(\dot{x}, \dot{z}, \dot{\theta}\). The system is controlled by two rotor thrusts \(u_1, u_2\), and the continuous-time dynamics are:
\begin{align}
\ddot{x} &= -\frac{1}{m} \sin(\theta) (u_1 + u_2), \\
\ddot{z} &= \frac{1}{m} \cos(\theta) (u_1 + u_2) - g, \\
\ddot{\theta} &= \frac{\ell}{I} (u_1 - u_2),
\end{align}
where \(m = 0.486\,\text{kg}\), \(\ell = 0.25\,\text{m}\), \(I = 0.00383\,\text{kg}\cdot\text{m}^2\), and \(g = 9.81\,\text{m/s}^2\). The state vector is \(\bfx = [x, z, \theta, \dot{x}, \dot{z}, \dot{\theta}]^\top\). The control inputs are constrained to be within \([0, 2.5 \cdot u_{\text{eq}}]\), where \(u_{\text{eq}} = \frac{mg}{2}\). 

The verification domain is defined as:
\[
\mathcal{X}_{\text{quadrotor}} := \left\{ \bfx \in \mathbb{R}^6 : 
\bfx \in [-0.75, 0.75]^2 \times \left[-\tfrac{\pi}{2}, \tfrac{\pi}{2}\right] \times [-4, 4]^2 \times [-3, 3] \right\}.
\]

\subsection{Neural Network Architectures and Training Parameters}

We adopt the same network structures used in~\citet{yang2024lyapunovstable}. The configurations are summarized in Table~\ref{tab:nn_architectures}.  All networks use leaky ReLU activations.

\begin{table}[h]
\centering
\caption{Neural network architecture for each system.}
\label{tab:nn_architectures}
\begin{tabular}{lcc}
\toprule
System & Controller Network & Lyapunov Network \\
\midrule
Inverted Pendulum & 4 layers, width 8 & 3 layers, widths [16, 16, 8] \\
Path Tracking     & 4 layers, width 8 & Quadratic form \\
2D Quadrotor      & 2 layers, width 8 & Quadratic form \\
\bottomrule
\end{tabular}
\end{table}

\end{document}